\newcommand\blfootnote[1]{
  \begingroup
  \renewcommand\thefootnote{}\footnote{#1}
  \addtocounter{footnote}{-1}
  \endgroup
}
\theoremstyle{plain}
\newtheorem{theorem}{Theorem}[section]
\newtheorem{lemma}{Lemma}[section]
\newtheorem{proposition}{Proposition}[section]
\theoremstyle{definition}
\newtheorem{definition}{Definition}[section]
\newtheorem{condition}{Condition}[section]
\newtheorem{assumption}{Assumption}[section]
\theoremstyle{remark}
\newtheorem*{remark}{Remark}
\begin{document}

\begin{frontmatter}
\title{Stationary Behavior of Constant Stepsize SGD Type Algorithms: An Asymptotic Characterization}

\begin{aug}
\author[A]{\fnms{Zaiwei} \snm{Chen}\ead[label=e1,mark]{zchen458@gatech.edu}},
\author[A]{\fnms{Shancong} \snm{Mou}\ead[label=e2,mark]{shancong.mou@gatech.edu}},
\and
\author[A]{\fnms{Siva Theja} \snm{Maguluri}\ead[label=e3,mark]{siva.theja@gatech.edu}}

\address[A]{
Geogia Institute of Technology,
\printead{e1,e2,e3}}

\blfootnote{A version of this work was submitted to a conference on 28th May, 2021}
\blfootnote{Equal contribution between Zaiwei Chen and Shancong Mou}

\end{aug}

\begin{abstract}
Stochastic approximation (SA) and stochastic gradient descent (SGD) algorithms are work-horses for modern machine learning algorithms. 
Their constant stepsize variants are preferred in practice due to  fast convergence behavior. However, constant step stochastic iterative algorithms do not converge asymptotically to the optimal solution, but instead have a stationary distribution, which in general cannot be analytically characterized. In this work, we study the asymptotic behavior of the appropriately scaled stationary distribution, in the limit when the constant stepsize goes to zero. Specifically, we consider the following three settings:  (1) SGD algorithms with smooth and strongly convex objective, (2) linear SA algorithms involving a Hurwitz matrix, and (3) nonlinear SA algorithms involving a contractive operator. When the iterate is scaled by $1/\sqrt{\alpha}$, where $\alpha$ is the constant stepsize, we show that the limiting scaled stationary distribution is a solution of an integral equation. Under a uniqueness assumption (which can be removed in certain settings) on this equation, we further characterize the limiting distribution as a Gaussian distribution whose covariance matrix is the unique solution of a suitable Lyapunov equation. For SA algorithms beyond these cases, our numerical experiments suggest that unlike central limit theorem type results: (1) the scaling factor need not be $1/\sqrt{\alpha}$, and (2) the limiting distribution need not be Gaussian. Based on the numerical study, we come up with a formula to determine the right scaling factor, and make insightful connection to the Euler-Maruyama discretization scheme for approximating stochastic differential equations.
\end{abstract}

\end{frontmatter}

\section{Introduction}\label{sec:intro}

Stochastic approximation (SA) algorithms are the major work-horses for solving large-scale optimization and machine learning problems. Theoretically, to achieve asymptotic convergence, we should use diminishing stepsizes (learning rates) with proper decay rate \citep{nemirovski2009robust}. However, constant stepsize SA algorithms are preferred in practice due to their faster convergence \citep{goodfellow2016deep}. In that case, instead of converging asymptotically to the desired solution, the iterates of constant stepsize SA have a stationary distribution. Although in many cases such weak convergence to a stationary distribution was established in the literature, it is not possible to fully characterize the limiting distribution. The reason is that, when constant stepsize is used, the distribution of the noise sequence within the SA algorithm plays an important role in the stationary distribution of the iterates. Since the distribution of the noise is in general unknown, the stationary distribution cannot be analytically characterized. In this work, building upon the works on stationary distribution of constant stepsize SA algorithms, we aim at understanding the limiting behavior of the properly scaled stationary distribution as the constant stepsize goes to zero.

More formally, with initialization $X_0^{(\alpha)}\in\mathbb{R}^d$, consider the SA algorithm

\begin{align}\label{algo:sa}
	X_{k+1}^{(\alpha)}=X_k^{(\alpha)}+\alpha\left(F(X_k^{(\alpha)})+w_k\right),
\end{align}
where $F:\mathbb{R}^d\mapsto\mathbb{R}^d$ is a general nonlinear operator, $\alpha$ is the constant stepsize, and $\{w_k\}$ is the noise sequence. Observe that SA algorithm (\ref{algo:sa}) can be viewed as an iterative algorithm for solving the equation $F(x)=0$ in the presence of noise \citep{robbins1951stochastic}. A typical example is when $F(x)=-c \nabla f(x)$ (where $c>0$ is a constant) for some objective function $f(\cdot)$, in this case Algorithm (\ref{algo:sa}) becomes the popular stochastic gradient descent (SGD) algorithm for minimizing $f(\cdot)$ \citep{lan2020first,bottou2018optimization}. Another example lies in the context of reinforcement learning, where $F(x)=\mathcal{T}(x)-x$, and $\mathcal{T}(\cdot)$ is the Bellman operator \citep{sutton2018reinforcement}. In this case, Algorithm (\ref{algo:sa}) captures popular reinforcement learning algorithms such as TD-learning \citep{sutton1988learning} and $Q$-learning \citep{watkins1992q}.

Under some mild conditions on the operator $F(\cdot)$, it was shown in the literature that the sequence $\{X_k^{(\alpha)}\}$ converges weakly to some random variable $X^{(\alpha)}$ \citep{dieuleveut2017bridging, bianchi2020convergence,yu2020analysis, durmus2021riemannian}. However, for a fixed $\alpha$, it is not possible to fully characterize the distribution of $X^{(\alpha)}$ because it depends on the distribution of the noise sequence $\{w_k\}$, which is usually unknown. In this work, we further consider letting $\alpha$ go to zero, and study the distribution of a properly centered and scaled iterate. Specifically, let  $Y_k^{(\alpha)}:=(X_k^{(\alpha)}-x^*)/g(\alpha)$, where $x^*$ is the solution of $F(x)=0$, and $g:\mathbb{R}\mapsto\mathbb{R}$ is a properly chosen scaling function\footnote{The scaling function is unique up to a numerical factor}. When $k$ goes to infinity, we expect that $Y_k^{(\alpha)}$ converges weakly to some random variable $Y^{(\alpha)}$. Then we let $\alpha$ go to zero, and our goal is to further characterize the weak limit of $Y^{(\alpha)}$. Notice that proper scaling of the iterates is essential for raveling its fine grade behavior because otherwise the limiting distribution of the un-scaled iterates will converge to a singleton as the stepsize $\alpha$ 
goes to zero, which is analogous to the almost sure convergence result for using diminishing stepsizes in SA literature.

To summarize, we want to find a suitable scaling function $g(\cdot)$ and characterize the following two-step weak convergence of the centered scaled iterate $Y^{(\alpha)}_k=(X^{(\alpha)}_k-x^*)/g(\alpha)$:
\begin{align}\label{eq:two_limit}
	Y_k^{(\alpha)}\overset{k\rightarrow\infty}{\Longrightarrow} Y^{(\alpha)}\overset{\alpha\rightarrow 0}{\Longrightarrow}Y,
\end{align}
where we use the notation $\Rightarrow$ for weak convergence (or convergence in distribution).

\subsection{Main Contributions}\label{subsec:contribution}
Our main contributions are twofold.

\textit{Characterizing the Distribution of $Y$.}  
We propose a general framework for characterizing the distribution of $Y$ in the following 3 cases:
(1) SGD with a smooth and strongly convex objective, (2) linear SA with a Hurwitz matrix, and (3) SA involving a contractive operator. In particular, we show that in all three cases above the correct scaling function is $g(\alpha)=\sqrt{\alpha}$, and the distribution of $Y$ is Gaussian with mean zero and covariance matrix being the unique solution of an appropriate Lyapunov equation. Our proof is to use the characteristic function as a test function to obtain an integral equation of the distribution of $Y$, and then show that the desired Gaussian distribution solves the integral equation.

\textit{Determining the Suitable Scaling Function.} For more general SA algorithms, we show empirically that the scaling function need not be $g(\alpha)=\sqrt{\alpha}$ and the distribution of $Y$ need not be Gaussian. Inspired by this observation, we propose a method to find the the correct scaling function for general SA algorithms. In particular, our results indicate that the scaling function $g(\alpha)$ should be chosen such that (1) $\lim_{\alpha\rightarrow 0}\frac{\alpha}{g(\alpha)}=0$, and (2) the function $\Tilde{F}(\cdot)$ defined by $\Tilde{F}(y)=\lim_{\alpha\rightarrow 0}\frac{g(\alpha)F(yg(\alpha)+x^*)}{\alpha}$ is non-trivial in the sense that it is not identically zero or infinity. Our proposed condition is verified in numerical experiments. Moreover, we make an insightful connection between the choice of $g(\alpha)$ and the Euler-Maruyama discretization scheme for approximating a stochastic differential equation (SDE) -- Langevin diffusion \citep{sauer2012numerical}.

\subsection{An Illustrative Example}\label{subsec:illustrative}

In this section, we provide an example to further illustrate the problem we are going to study.
Consider SA algorithm (\ref{algo:sa}). Suppose that $F(x)=-x$ is a scalar valued function, and $\{w_k\}$ is a sequence of i.i.d. standard normal random variables. We make such noise assumption here only for ease of exposition, and it will be relaxed in later sections.  In this case, the SA algorithm (\ref{algo:sa}) becomes 
\begin{align}\label{eq:sa1}
	X_{k+1}^{(\alpha)}=(1-\alpha)X_k^{(\alpha)}+\alpha w_k.
\end{align}
This algorithm has the following two simple interpretations: (1) it can be viewed as the SGD algorithm for minimizing a quadratic objective function $f(x)=x^2/2$, which has a unique minimizer $x^*=0$, (2) it can also be viewed as an SA algorithm for solving the fixed-point equation $\mathcal{T}(x)=x$ with $\mathcal{T}(x)$ being identically equal to zero, therefore $x^*=0$ is the unique fixed-point.

Since $x^*=0$, let $Y_k^{(\alpha)}=X_k^{(\alpha)}/\sqrt{\alpha}$ be the centered scaled iterate. To obtain an update equation for $Y_k^{(\alpha)}$, dividing both sides of Eq. (\ref{eq:sa1}) by $\sqrt{\alpha}$ and we have for all $k\geq 0$:
\begin{align*}
	Y_k^{(\alpha)}&=(1-\alpha)Y_{k-1}^{(\alpha)}+\sqrt{\alpha} w_{k-1}\\
	&=(1-\alpha)^2Y_{k-2}^{(\alpha)}+(1-\alpha)\sqrt{\alpha} w_{k-2}+\sqrt{\alpha} w_{k-1}\\
	&=\cdots\\
	&=(1-\alpha)^kY_0^{(\alpha)}+\sum_{i=0}^{k-1}(1-\alpha)^{k-i-1}\sqrt{\alpha} w_i.
\end{align*}
Since $Y_k^{(\alpha)}$ is a linear combination of independent Gaussian random variables, $Y_k^{(\alpha)}$ itself is also Gaussian. Since
\begin{align*}
	\mathbb{E}[Y_k^{(\alpha)}]&=(1-\alpha)^kY_0^{(\alpha)}+\sum_{i=0}^{k-1}(1-\alpha)^{k-i-1}\sqrt{\alpha} \mathbb{E}[w_i]=(1-\alpha)^kY_0^{(\alpha)},
\end{align*}
and
\begin{align*}
	\mathbb{V}[Y_k^{(\alpha)}]&=\mathbb{V}\left[(1-\alpha)^kY_0^{(\alpha)}+\sum_{i=0}^{k-1}(1-\alpha)^{k-i-1}\sqrt{\alpha} w_i\right]=\frac{1}{2-\alpha}\left(1-(1-\alpha)^{2k}\right),
\end{align*}
where $\mathbb{E}[\;\cdot\;]$ and  $\mathbb{V}[\;\cdot\;]$ denote the mean and variance, respectively, 
we have $\lim_{k\rightarrow \infty}\mathbb{E}[Y_k^{(\alpha)}]=0$ and $\lim_{k\rightarrow \infty}\mathbb{V}[Y_k^{(\alpha)}]=\frac{1}{2-\alpha}$. Therefore, the sequence $Y_k^{(\alpha)}$ converges weakly to a random variable $Y^{(\alpha)}$, whose distribution is $ \mathcal{N}(0,\frac{1}{2-\alpha})$. In this case, we are able to analytically characterize the distribution of $Y^{(\alpha)}$ for a fixed $\alpha$ because of the simplicity of the algorithm (\ref{eq:sa1}) and the noise $\{w_k\}$ being i.i.d. Gaussian. For the general SA algorithm (\ref{algo:sa}) with limited information on the noise sequence $\{w_k\}$, it is in general not possible to fully characterize the distribution of $Y^{(\alpha)}$.

Now consider the second weak convergence in Eq. (\ref{eq:two_limit}). Since we have already shown that $Y^{(\alpha)}\sim \mathcal{N}(0,\frac{1}{2-\alpha})$. As $\alpha$ goes to zero, we see that $Y^{(\alpha)}$ converges weakly to a random variable $Y$, whose distribution is $\mathcal{N}(0,\frac{1}{2})$. As opposed to the first weak convergence in Eq. (\ref{eq:two_limit}), where the distribution of $Y^{(\alpha)}$ in general cannot be fully characterized, we are able to characterize (in later sections) the distribution of $Y$ for the more general algorithm (\ref{algo:sa}), and for more general noise assumptions. Intuitively, the reason is that as the constant stepsize decreases, the 
effect of the entire distribution of the noise $\{w_k\}$ on the distribution of $Y^\alpha$ is weakened. In the limit, only the mean and the variance of $w_k$ play roles in determining the distribution of $Y$. This is analogous to central limit theorem type of results.

To summarize, we have shown in the special case of Algorithm (\ref{eq:sa1}) that the correct scaling function is $g(\alpha)=\sqrt{\alpha}$, and the distribution of the limiting random variable $Y$ is a Gaussian distribution with mean zero and variance $1/\sqrt{2}$. In Section \ref{sec:distribution}, we extend this result to the more general algorithm (\ref{algo:sa}) with weaker noise assumptions.

\subsection{Related Literature}\label{subsec:literature}

Since proposed in \citep{robbins1951stochastic}, SA has been popular for solving large scale optimization in modern machine learning applications. Although require using diminishing stepsizes to achieve convergence \citep{hu2017diffusion, xie2020linear, mertikopoulos2020almost,shamir2013stochastic, li2019convergence,fehrman2020convergence, gower2021sgd}, constant stepsize is used in practice \citep{goodfellow2016deep}. In contrast to the success in machine learning practice, there is little discussion about the stationary distribution of constant step size SGD \citep{dieuleveut2017bridging, bianchi2020convergence,yu2020analysis,   durmus2021riemannian}. Among them, \cite{dieuleveut2017bridging} bridges Markov chain theory and the constant step size SGD algorithm. They provided an explicit asymptotic expansion of the moments of the averaged SGD iterates. \cite{bianchi2020convergence} studied the asymptotic behavior of constant step size SGD for a nonconvex nonsmooth, locally Lipchitz objective function. It was shown that in a small step size regime, the interpolated trajectory of the algorithm converges in probability towards the solutions of the differential inclusion $\dot{x}=\partial F(x)$ and the invariant distribution of the corresponding Markov chain converges weakly to the set of invariant distribution of the differential inclusion. \cite{yu2020analysis} established an asymptotic normality result for the constant step size SGD algorithm for a non-convex and non-smooth objective function satisfying a dissipativity property. \cite{durmus2021riemannian} first established non-asymptotic performance bounds under Lyapunov conditions and then proved that for any step size, the corresponding Markov chain admits a unique stationary distribution. 

The work mentioned before established the stationary distribution for almost strongly convex and smooth functions for a fixed constant stepsize.  Since the SGD iterates will converge to a singleton as the constant step size goes to zero, none of the previously mentioned literates can be applied to study the limiting behavior of SGD in this regime. To understand such behavior, we propose to study the properly centered and scaled iterates. Although not directly related, it shares a similar flavor when studying the limiting behavior of the stationary distribution of the stochastic gradient Langevin dynamics (SGLD) iterates as step size goes to zero. 

Another set of related literature is on the diffusion approximation of SGD \citep{li2017stochastic,feng2017semi,yang2021fast,sirignano2020stochastic, latz2021analysis}. Authors aim to approximate the trajectory of SGD by a diffusion process which solves an SDE. Notice that they also study the scaled version of the diffusion limit of SGD. However, different from our approach, their scale is in temporal domain and cannot be applied in our research.

The Markov chain perspective of studying SGD iterates when step size goes to zero \citep{dieuleveut2017bridging} is related to the heavy traffic analysis in queuing theory \citep{eryilmaz2012asymptotically}. It has been studied in the literature using fluid and diffusion limits 
\citep{gamarnik2006validity,harrison1988brownian,har_state_space,harlop_state_space,stolyar2004maxweight,Williams_state_space} where the interchange of limit is usually problematic \citep{eryilmaz2012asymptotically}. An alternative approach in stochastic networks is based on drift arguments introduced by \citep{eryilmaz2012asymptotically} and further generalized by \citep{maguluri2016heavy,maguluri2018optimal,hurtado2020transform,hurtado2020logarithmic, mou2020heavy}. We adopt similar techniques in quantifying the limiting distribution of the scaled SGD iterates. Notice that in stochastic networks, people mainly focus on finite state space Markov chains. However, when it comes to SGD iterates, the state space is continuous and thus more challenging.

The rest of this paper is organized as follows. In Section \ref{sec:distribution}, we characterize the distribution of $Y$ (cf. Eq (\ref{eq:two_limit})) in the following cases: (1) $F(\cdot)$ is the negative gradient for some smooth and strongly convex function $f(\cdot)$, (2) $F(x)=Ax+b$, where $A$ is a Hurwitz matrix, and (3) $F(x)=\mathcal{T}(x)-x$, where $\mathcal{T}(\cdot)$ is a contraction operator. In all three cases above, the scaling function is chosen to be $g(\alpha)=\sqrt{\alpha}$. Then in Section \ref{sec:scaling}, we first empirically show that for more general SA algorithms beyond these cases, the scaling function need not be $g(\alpha)=\sqrt{\alpha}$ and the distribution of $Y$ need not be Gaussian. Then we present a method to determine the scaling function for more general SA algorithms and make connection to the Euler-Maruyama discretization scheme for approximating SDE. Finally, we conclude this paper in Section \ref{sec:conclusion}.

\section{Characterizing the Asymptotic Stationary Distribution}\label{sec:distribution}

Through out this section, we make the following assumption regarding the noise sequence $\{w_k\}$.
\begin{assumption}\label{as:noise}
	The sequence $\{w_k\}$ is independent and identically distributed with mean zero and a positive definite covariance matrix $\Sigma\in\mathbb{R}^{d\times d}$.
\end{assumption}

Note that Assumption \ref{as:noise} is much weaker than the assumption used in Section \ref{subsec:illustrative}, where the noise is assumed to obey the Gaussian distribution. Nevertheless, extending our results to the more general noise setting (e.g. martingale difference noise, Markovian noise, etc) is one of our future directions.

\subsection{SGD for Minimizing a Smooth and Strongly Convex Objective}\label{subsec:SGD}

Suppose that $F(x)=-\nabla f(x)$, where $f(\cdot)$ is an objective function. Then the SA algorithm becomes
\begin{align}\label{algo:SGD}
	X_{k+1}^{(\alpha)}=X_k^{(\alpha)}+\alpha\left(-\nabla  f(X_k^{(\alpha)})+w_k\right),
\end{align}
which is the well-known SGD algorithm for minimizing $f(\cdot)$. To proceed, we require the following definition.
\begin{definition}
	A convex differentiable function $h:\mathbb{R}^d\mapsto\mathbb{R}$ is said to be $L$-smooth and $\sigma$-strongly convex with respect to the Euclidean norm $\|\cdot\|_2$ if and only if
	\begin{align*}
		h(y)&\leq h(x)+\langle \nabla h(x),y-x\rangle+\frac{L}{2}\|x-y\|_2^2\tag{$L$-smooth},\\
		h(y)&\geq  h(x)+\langle \nabla h(x),y-x\rangle+\frac{\sigma}{2}\|x-y\|_2^2\tag{$\sigma$-convex}
	\end{align*}
	for all $x,y\in\mathbb{R}^d$.
\end{definition}

To characterize the asymptotic behavior of Algorithm (\ref{algo:SGD}), we make the following assumption.

\begin{assumption}\label{as:SGD}
	The function $f:\mathbb{R}^d\mapsto\mathbb{R}$ is twice differentiable, and is $L$-smooth and $\sigma$-strongly convex.
\end{assumption}

Under Assumption \ref{as:SGD}, the function $f(x)$ has a unique minimizer (or $F(x)=0$ has a unique solution), which we have denoted by $x^*$. To proceed, let $Y_k^{(\alpha)}=(X_k^{(\alpha)}-x^*)/\sqrt{\alpha}$ be the centered scaled iterate. We first write down the corresponding update equation of $Y_k^{(\alpha)}$ in following:
\begin{align}\label{algo:Y:SGD}
	Y_{k+1}^{(\alpha)}=Y_k^{(\alpha)}-\sqrt{\alpha}\nabla f\left(\sqrt{\alpha} Y_k^{(\alpha)}+x^*\right)+\sqrt{\alpha} w_k,
\end{align}
which is obtained by first subtracting both sides of Eq. (\ref{algo:SGD}) by $x^*$, and then dividing by $\sqrt{\alpha}$.

We next characterize the two-step weak convergence (cf. Eq. (\ref{eq:two_limit})) in the following theorem. Let $H_f\in\mathbb{R}^{d\times d}$ be the Hessian matrix of the objective function $f(\cdot)$ evaluated at $x^*$, which is well-defined because $f(\cdot)$ is twice 
differentiable (cf. Assumption \ref{as:SGD}).

\begin{theorem} \label{thm:SGDLimDist}
	Consider the iterates $\{Y_k^{(\alpha)}\}$ generated by Algorithm (\ref{algo:Y:SGD}). Suppose that Assumptions \ref{as:noise} and \ref{as:SGD} are satisfied, then the following statements hold.
	\begin{enumerate}[(1)]
		\item There exists a threshold $\Bar{\alpha}>0$ such that for all $\alpha\in (0,\Bar{\alpha})$, the sequence of random variables $\{Y_k^{(\alpha)}\}$ converges weakly to some random variable $Y^{(\alpha)}$, which satisfies $\mathbb{E}[\|Y^{(\alpha)}\|_2^2]<\infty$.
		\item For any positive sequence $\{\alpha_i\}$ satisfying $\alpha_i\in (0,\Bar{\alpha})$ for all $i$ and $\lim_{i\rightarrow\infty}\alpha_i=0$, the sequence $\{Y^{(\alpha_i)}\}$ converges weakly to a random variable $Y$, which satisfies the following integral equation
		\begin{align}\label{eq:SGD}
			\mathbb{E}\left[\left(t^\top \Sigma t+2it^\top H_f Y\right)e^{i t^\top Y}\right]=0
		\end{align}
		for all $t\in\mathbb{R}^d$.
		In addition, suppose that Eq. (\ref{eq:SGD}) has a unique solution (in terms of the distribution of $Y$), then the distribution of $Y$ is a Multivariate normal distribution with mean zero and covariance matrix $\Sigma_Y$ being the unique solution of the Lyapunov equation $H_f\Sigma_Y+\Sigma_Y H_f^\top=\Sigma$.
	\end{enumerate}
\end{theorem}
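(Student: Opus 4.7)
The plan is to handle the two parts in sequence, with the heart of the argument being a characteristic-function computation in part (2).

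For part (1), I would establish a uniform-in-$\alpha$ quadratic Lyapunov drift for $V(x) := \|x-x^*\|_2^2$ applied to the un-scaled iterates. Expanding $V(X_{k+1}^{(\alpha)})$ using the SGD recursion, taking conditional expectation, and invoking (i) $\nabla f(x^*) = 0$, (ii) $\sigma$-strong convexity giving $\langle \nabla f(x), x-x^*\rangle \geq \sigma \|x-x^*\|_2^2$, (iii) $L$-smoothness giving $\|\nabla f(x)\|_2 \leq L\|x-x^*\|_2$, and (iv) Assumption \ref{as:noise} giving $\mathbb{E}\|w_k\|_2^2 = \operatorname{tr}(\Sigma)$, yields
\begin{equation*}
\mathbb{E}[V(X_{k+1}^{(\alpha)}) \mid X_k^{(\alpha)}] \leq (1-2\sigma\alpha + L^2\alpha^2)V(X_k^{(\alpha)}) + \alpha^2 \operatorname{tr}(\Sigma),
\end{equation*}
which gives a geometric contraction for all $\alpha \in (0,\bar\alpha)$ with $\bar\alpha := \sigma/L^2$. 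Standard Markov chain theory (e.g., Meyn-Tweedie) combined with this drift and mild irreducibility produces a unique stationary distribution and convergence of $X_k^{(\alpha)}$ to it from any initialization. Taking stationary expectations in the drift and rescaling by $1/\alpha$ gives the uniform bound $\sup_{\alpha \in (0,\bar\alpha)} \mathbb{E}[\|Y^{(\alpha)}\|_2^2] \leq \operatorname{tr}(\Sigma)/\sigma$, which proves the moment claim and, via Markov's inequality, gives tightness of the family $\{Y^{(\alpha)}\}$.

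For part (2), set $\phi_\alpha(t) := \mathbb{E}[e^{it^\top Y^{(\alpha)}}]$ and apply the recursion (\ref{algo:Y:SGD}) at stationarity. Using independence of $w_0$ from $Y_0^{(\alpha)}$,
\begin{equation*}
\phi_\alpha(t) = \mathbb{E}\!\left[e^{it^\top Y^{(\alpha)}} e^{-i\sqrt{\alpha}\, t^\top \nabla f(\sqrt{\alpha}\, Y^{(\alpha)} + x^*)}\right]\, \mathbb{E}\!\left[e^{i\sqrt{\alpha}\, t^\top w_0}\right].
\end{equation*}
Using $\nabla f(x^*) = 0$, I would Taylor expand $\sqrt{\alpha}\, \nabla f(\sqrt{\alpha} Y + x^*) = \alpha H_f Y + R_\alpha(Y)$, with the remainder $R_\alpha$ controlled via $L$-Lipschitzness of $\nabla f$ and continuity of the Hessian. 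Expanding both complex exponentials to second order, the order-$1$ and order-$\sqrt{\alpha}$ terms cancel, and the order-$\alpha$ balance reads
\begin{equation*}
0 = -i\alpha\, \mathbb{E}\!\left[t^\top H_f\, Y^{(\alpha)} e^{it^\top Y^{(\alpha)}}\right] - \tfrac{\alpha}{2} t^\top \Sigma t\, \phi_\alpha(t) + o(\alpha).
\end{equation*}
Dividing by $\alpha$ and passing to the limit along any subsequence on which $Y^{(\alpha_i)} \Rightarrow Y$ (guaranteed by tightness and Prokhorov's theorem), using uniform integrability from the second-moment bound of part (1) to pass the $H_f Y^{(\alpha)}$ term into the limit, yields exactly the integral equation (\ref{eq:SGD}).

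To conclude, I would verify that the claimed Gaussian solves (\ref{eq:SGD}). For $Y \sim \mathcal{N}(0,\Sigma_Y)$ one has $\mathbb{E}[Y e^{it^\top Y}] = i\Sigma_Y t\, e^{-\frac{1}{2}t^\top \Sigma_Y t}$; substituting and factoring out $\phi(t)$ reduces the integral equation to the quadratic identity $t^\top(\Sigma - H_f\Sigma_Y - \Sigma_Y H_f^\top)t = 0$ for all $t \in \mathbb{R}^d$, which is exactly the Lyapunov equation. Since $H_f \succeq \sigma I \succ 0$ and $\Sigma \succ 0$, the Lyapunov equation has a unique symmetric positive definite solution. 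By the uniqueness hypothesis on (\ref{eq:SGD}), every subsequential weak limit equals this Gaussian, and so the full family converges. The most delicate step I anticipate is the exchange of limits in the Taylor expansion: the $o(\alpha)$ remainder terms involve $\|Y^{(\alpha)}\|_2^2$ and $\|Y^{(\alpha)}\|_2^3$ weighted by bounded complex exponentials, and verifying that they vanish uniformly after division by $\alpha$ requires careful use of the uniform second-moment bound from part (1), the $L$-Lipschitz property of $\nabla f$, and (for the cubic remainder in the exponential expansion) a truncation argument splitting $\{\|Y^{(\alpha)}\|_2 \leq M\}$ from its complement.
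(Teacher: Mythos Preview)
Your proposal is correct and follows essentially the same architecture as the paper: a quadratic Lyapunov drift gives existence of the stationary law and the uniform bound $\mathbb{E}\|Y^{(\alpha)}\|_2^2\le \operatorname{tr}(\Sigma)/\sigma$ (hence tightness), then the stationary characteristic-function identity is Taylor-expanded, divided by $\alpha$, and passed to the limit via uniform integrability to obtain (\ref{eq:SGD}), after which the Gaussian is verified directly. One simplification worth noting: the paper expands the gradient exponential only to \emph{first} order, $e^{-i\sqrt{\alpha}\,t^\top\nabla f}=1-i\sqrt{\alpha}\,t^\top\nabla f+\mathcal{O}(\alpha\|t\|^2\|\nabla f\|^2)$, so every remainder is controlled by $\alpha\,\mathbb{E}\|Y^{(\alpha)}\|_2^2$ and the cubic-moment/truncation issue you flag never arises.
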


\begin{remark}
	To establish Theorem \ref{thm:SGDLimDist} (2), we require Eq. (\ref{eq:SGD}) to have a unique solution in terms of the distribution of $Y$. Such assumption will be relaxed to some extent in Section \ref{subsec:uniqueness}.
\end{remark}

Since $\Sigma$ is positive definite (cf. Assumption \ref{as:noise}), and $H_f$ is also positive definite under Assumption \ref{as:SGD}, it is well established in the literature that the Lyapunov equation $H_f\Sigma_Y+\Sigma_Y H_f^\top=\Sigma$ has a unique solution, which is explicitly given by 
\begin{align*}
	\Sigma_Y=\int_{0}^\infty e^{-H_f u}\Sigma e^{-H_f^\top u}du.
\end{align*}

Consider the special case where $f(x)=x^2/2$. In this case we have $H_f=1$, and hence $\Sigma_Y=\frac{1}{2}\Sigma$ by the Lyapunov equation. As a result, the distribution of the limiting random variable $Y$ is Gaussian with mean zero, and covariance matrix being $\frac{1}{2}\Sigma$. This agrees with the illustrative example (which is for scalar valued iterates) presented in Section \ref{subsec:illustrative}.

From Theorem \ref{thm:SGDLimDist}, we see that the distribution of $Y$ only depends on the Hessian of $f(\cdot)$ at $x^*$. This makes intuitive sense because we are studying the asymptotic behavior of SA algorithm (\ref{algo:SGD}), and only the properties of $f(\cdot)$ around $x^*$ should play a role in determining the stationary distribution.

\subsection{Stochastic Approximation for Solving Linear Systems of Equations}\label{subsec:linear_SA}

Suppose that $F(x)=Ax+b$, where $A\in\mathbb{R}^{d\times d}$ and $b\in\mathbb{R}^d$. Then the SA algorithm (\ref{algo:sa}) becomes
\begin{align}\label{algo:linear_SA}
	X_{k+1}^{(\alpha)}=X_k^{(\alpha)}+\alpha\left(AX_k^{(\alpha)}+b+w_k\right),
\end{align}
which aims at iteratively solving the linear equation $Ax+b=0$.
Note that since the matrix $A$ is not necessarily symmetric, $F(x)$ need not be a gradient of any objective function. Such linear SA algorithm arises in many realistic applications. One typical example is the TD-learning algorithm for solving the policy evaluation problem in reinforcement learning, where the goal is to solve a linear Bellman equation. See \cite{bertsekas1996neuro,srikant2019finite} for more details.

To study the asymptotic behavior of Algorithm (\ref{algo:linear_SA}), we make the following assumption regarding the matrix $A$.

\begin{assumption}\label{as:linear_sa}
	The matrix $A$ is Hurwitz, i.e., all the eigenvalues of the matrix $A$ have strictly negative real part.
\end{assumption}
\begin{remark}
	Since $A$ being Hurwitz implies $A$ being non-singular, Assumption \ref{as:linear_sa} implies that the target equation $Ax+b=0$ has a unique solution, which we denote by $x^*$.
\end{remark}

Assumption \ref{as:linear_sa} is standard in studying linear SA algorithms. In particular, it was shown in the literature that under Assumption \ref{as:linear_sa} and some mild conditions on the noise $\{w_k\}$, Algorithm (\ref{algo:linear_SA}) converges in the mean square sense to a neighborhood around $x^*$ \citep{bertsekas1996neuro}.

To study the asymptotic distribution, for a fixed stepsize $\alpha$, we define the centered scaled iterate $Y_k^{(\alpha)}$ by $Y_k^{(\alpha)}=(X_k^{(\alpha)}-x^*)/\sqrt{\alpha}$ for all $k\geq  0$. To find the corresponding update equation for $Y_k^{(\alpha)}$, using the update equation for $X_k^{(\alpha)}$ and the fact that $Ax^*+b=0$, we obtain
\begin{align}\label{algo:linear_SA_scaled}
	Y_{k+1}^{(\alpha)}=(I+\alpha A) Y_k^{(\alpha)}+\sqrt{\alpha} w_k.
\end{align}
The full characterization of the two-step weak convergence (cf. Eq. (\ref{eq:two_limit})) of $\{Y_k^{(\alpha)}\}$ is presented in the following.

\begin{theorem}\label{thm:linear_SA}
	Consider the iterates $\{Y_k^{(\alpha)}\}$ generated by Algorithm (\ref{algo:linear_SA_scaled}). Suppose that Assumptions \ref{as:noise} and \ref{as:linear_sa} are satisfied, then the following statements hold.
	\begin{enumerate}[(1)]
		\item There exists a threshold $\Bar{\alpha}'>0$ such that for all $\alpha\in (0,\Bar{\alpha}')$, the sequence of random variables $\{Y_k^{(\alpha)}\}$ converges weakly to some random variable $Y^{(\alpha)}$, which satisfies $\mathbb{E}[\|Y^{(\alpha)}\|_2^2]<\infty$.
		\item For any positive sequence $\{\alpha_i\}$ satisfying $\alpha_i\in (0,\Bar{\alpha}')$ for all $i$ and $\lim_{i\rightarrow\infty}\alpha_i=0$, the sequence of random variables $\{Y^{(\alpha_i)}\}$ converges weakly to a random variable $Y$, which satisfies the following equation
		\begin{align}\label{eq:linearSA}
			\mathbb{E}\left[\left(t^\top \Sigma t-2it^\top AY\right)e^{it^\top Y}\right]=0,\quad \forall\;t\in\mathbb{R}^d.
		\end{align}
		In addition, suppose that Eq. (\ref{eq:linearSA}) has a unique solution, then the distribution of $Y$ is a Multivariate normal distribution with mean zero and covariance matrix being the unique solution of the Lyapunov equation $A \Sigma_Y+\Sigma_Y A^\top+\Sigma=0$.
	\end{enumerate}
\end{theorem}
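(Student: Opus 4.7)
The plan mirrors the strategy for Theorem~\ref{thm:SGDLimDist}, but the linearity of \eqref{algo:linear_SA_scaled} makes each step explicit. For part~(1), I would combine a Lyapunov drift argument with an explicit closed form. Since $A$ is Hurwitz, there exists a symmetric positive definite $P$ solving $A^\top P + PA = -I$, and applying $V(y) = y^\top Py$ to \eqref{algo:linear_SA_scaled} yields
\begin{equation*}
\mathbb{E}[V(Y^{(\alpha)}_{k+1})\mid Y^{(\alpha)}_k] = V(Y^{(\alpha)}_k) + \alpha (Y^{(\alpha)}_k)^\top(A^\top P + PA + \alpha A^\top PA) Y^{(\alpha)}_k + \alpha\,\operatorname{tr}(P\Sigma),
\end{equation*}
which collapses to $(1-c\alpha) V(Y^{(\alpha)}_k) + \alpha\,\operatorname{tr}(P\Sigma)$ for some $c>0$ and all $\alpha$ below a threshold $\bar\alpha'$. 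Iterating yields a uniform-in-$k$ second moment bound. Unrolling \eqref{algo:linear_SA_scaled} also produces the closed form $Y^{(\alpha)}_k = (I+\alpha A)^k Y^{(\alpha)}_0 + \sqrt{\alpha}\sum_{i=0}^{k-1}(I+\alpha A)^{k-1-i} w_i$, and the geometric decay of $(I+\alpha A)^j$ makes the partial sum Cauchy in $L^2$, so $Y^{(\alpha)}_k$ converges weakly to some $Y^{(\alpha)}$ with $\mathbb{E}[\|Y^{(\alpha)}\|_2^2]<\infty$; tracking constants in the drift inequality further yields $\sup_{\alpha\in(0,\bar\alpha')}\mathbb{E}[\|Y^{(\alpha)}\|_2^2]<\infty$, which is what part~(2) will need.

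For part~(2), I would apply the characteristic function as a test function. Stationarity gives $Y^{(\alpha)} \stackrel{d}{=} (I+\alpha A) Y^{(\alpha)} + \sqrt{\alpha}\, w$ with $w$ independent of $Y^{(\alpha)}$; taking $\mathbb{E}[e^{it^\top\cdot}]$ on both sides and using independence yields $\phi^{(\alpha)}(t) = \mathbb{E}\bigl[e^{it^\top Y^{(\alpha)}} e^{i\alpha t^\top A Y^{(\alpha)}}\bigr]\phi_w(\sqrt{\alpha}\,t)$, where $\phi^{(\alpha)}$ and $\phi_w$ are the characteristic functions of $Y^{(\alpha)}$ and $w$. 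Substituting the second-order expansions
\begin{equation*}
\phi_w(\sqrt{\alpha}\,t) = 1 - \tfrac{\alpha}{2} t^\top \Sigma t + o(\alpha), \qquad e^{i\alpha t^\top A Y^{(\alpha)}} = 1 + i\alpha t^\top A Y^{(\alpha)} + O\bigl(\alpha^2 \|Y^{(\alpha)}\|_2^2\bigr)
\end{equation*}
(valid because $w$ has mean zero and covariance $\Sigma$), cancelling $\phi^{(\alpha)}(t)$, and dividing by $\alpha/2$ leaves $\mathbb{E}\bigl[(t^\top \Sigma t - 2it^\top A Y^{(\alpha)}) e^{it^\top Y^{(\alpha)}}\bigr] = o(1)$ as $\alpha\to 0$. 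The $\alpha$-uniform second moment bound from part~(1) delivers tightness of $\{Y^{(\alpha)}\}$ and uniform integrability of $Y^{(\alpha)} e^{it^\top Y^{(\alpha)}}$, so any weak subsequential limit $Y$ along $\alpha_i \downarrow 0$ satisfies Eq.~\eqref{eq:linearSA}.

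To close the argument under the uniqueness assumption, it suffices to exhibit one Gaussian solution. For $Y \sim \mathcal{N}(0, \Sigma_Y)$, differentiating under the integral sign gives $\mathbb{E}[e^{it^\top Y}] = e^{-\tfrac12 t^\top \Sigma_Y t}$ and $\mathbb{E}[Y e^{it^\top Y}] = i\Sigma_Y t\, e^{-\tfrac12 t^\top \Sigma_Y t}$, so \eqref{eq:linearSA} reduces to $t^\top(\Sigma + A\Sigma_Y + \Sigma_Y A^\top) t = 0$ for all $t$, which is precisely the Lyapunov equation $A\Sigma_Y + \Sigma_Y A^\top + \Sigma = 0$; existence and uniqueness of a positive definite $\Sigma_Y$ follow from $A$ being Hurwitz and $\Sigma$ being positive definite. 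The main technical hurdle I expect is the $\alpha$-uniform control of the Taylor remainders: the $o(\alpha)$ term in $\phi_w(\sqrt{\alpha}\,t)$ must be uniform in $t$ on compacts, and the $O(\alpha^2 \|Y^{(\alpha)}\|_2^2)$ term must not blow up after division by $\alpha$. Both follow from dominated convergence, invoked against the uniform second moment bound from part~(1) and the mean-zero, finite-covariance properties of $w$.
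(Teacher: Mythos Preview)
Your proposal is correct and follows essentially the same route as the paper's (omitted) proof, which by the authors' own remark parallels the proof of Theorem~\ref{thm:SGDLimDist}: a uniform second-moment bound giving tightness, the characteristic function as test function with second-order Taylor expansion at stationarity, passage to the limit via asymptotic uniform integrability, and verification that the Gaussian with Lyapunov covariance solves the resulting integral equation. The only notable difference is that for Part~(1) you give a self-contained Lyapunov drift argument via $P$ solving $A^\top P+PA=-I$ (plus the explicit unrolled sum), whereas the paper would adapt the cited Proposition~\ref{prop:SGD}; your route is cleaner here because linearity makes the drift computation exact, and it directly yields the $\alpha$-uniform bound $\sup_{\alpha}\mathbb{E}\|Y^{(\alpha)}\|_2^2<\infty$ that Part~(2) needs.
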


Since the matrix $A$ is Hurwitz, and $\Sigma$ is positive definite, the existence and uniqueness of a positive definition solution to the Lyapunov equation $A \Sigma_Y+\Sigma_Y A^\top+\Sigma=0$ are guaranteed \citep{haddad2011nonlinear}. Lyapunov equations were used extensively in studying the stability of linear ordinary differential equations (ODE). Interestingly, it also shows up in determining the limit distribution of centered scaled iterates of discrete linear SA algorithms.

\subsection{Stochastic Approximation under Contraction Assumption}\label{subsec:contractive_SA}

Suppose that $F(x)=\mathcal{T}(x)-x$, where $\mathcal{T}:\mathbb{R}^d\times\mathbb{R}^d$ is a general nonlinear operator. In this case, Algorithm (\ref{algo:sa}) becomes
\begin{align}\label{algo:contractive_SA}
	X_{k+1}^{(\alpha)}=X_k^{(\alpha)}+\alpha\left(\mathcal{T}\left(X_k^{(\alpha)}\right)-X_k^{(\alpha)}+w_k\right),
\end{align}
which can be interpreted as an SA algorithm for finding the fixed-point of the operator $\mathcal{T}(\cdot)$. These type of algorithms arise in the context of reinforcement learning, where $\mathcal{T}(\cdot)$ is the Bellman operator. To proceed, we need the following definition.

\begin{definition}
	Let $\nu_i$, $1\leq i\leq d$ be positive real numbers. Then the weighted $\ell_2$ norm $\|\cdot\|_{\nu}$ with weights $\{\nu_i\}_{1\leq i\leq d}$ is defined by $\|x\|_{\nu}=(\sum_{i=1}^d\nu_ix_i^2)^{1/2}$ for all $x\in\mathbb{R}^d$.
\end{definition}

Next, we state our assumption regarding the operator $\mathcal{T}(\cdot)$. 

\begin{assumption}\label{as:contraction}
	The operator $\mathcal{T}(\cdot)$ is differentiable, and 
	there exists $\gamma\in (0,1)$ such that $\|\mathcal{T}(x_1)-\mathcal{T}(x_2)\|_\mu\leq \gamma\|x_1-x_2\|_\mu$ for any $x_1,x_2\in\mathbb{R}^d$, where $\|\cdot\|_\mu$ is some weighted $\ell_2$-norm with weights $\{\mu_i\}_{1\leq i\leq d}$.
\end{assumption}

Assumption \ref{as:contraction} essentially states that the operator $\mathcal{T}(\cdot)$ is a contraction mapping with respect to the weighted $\ell_2$-norm $\|\cdot\|_\mu$. By Banach fixed-point theorem \citep{banach1922operations}, the operator $\mathcal{T}(\cdot)$ has a unique fixed-point, which we denote by $x^*$.

We next write down the update equation of the centered scaled iterate $Y_k^{(\alpha)}=(X_k^{(\alpha)}-x^*)/\sqrt{\alpha}$ in the following:
\begin{align}\label{algo:Y_contractive_SA}
	Y_{k+1}^{(\alpha)}=Y_k^{(\alpha)}+\sqrt{\alpha}\left(\mathcal{T}\left(\sqrt{\alpha} Y_k^{(\alpha)}+x^*\right)-\left(\sqrt{\alpha} Y_k^{(\alpha)}+x^*\right)\right)+\sqrt{\alpha} w_k.
\end{align}
In the next theorem, we characterize the distribution of the limiting random vector $Y$ (cf. Eq. (\ref{eq:two_limit})). Let $J\in\mathbb{R}^{d\times d}$ be the Jacobian matrix of $\mathcal{T}(\cdot)$ evaluated at $x^*$.

\begin{theorem}\label{thm:contraction}
	Consider the iterates $\{Y_k^{(\alpha)}\}$ generated by Algorithm (\ref{algo:Y_contractive_SA}). Suppose that Assumptions \ref{as:noise} and \ref{as:contraction} are satisfied, then the following statements hold.
	\begin{enumerate}[(1)]
		\item There exists a threshold $\Bar{\alpha}''>0$ such that for all $\alpha\in (0,\Bar{\alpha}'')$, the sequence of random variables $\{Y_k^{(\alpha)}\}$ converges weakly to some random variable $Y^{(\alpha)}$, which satisfies $\mathbb{E}[\|Y^{(\alpha)}\|_2^2]<\infty$.
		\item For any positive sequence $\{\alpha_i\}$ satisfying $\alpha_i\in (0,\Bar{\alpha}'')$ for all $i$ and $\lim_{i\rightarrow\infty}\alpha_i=0$, the sequence of random variables $\{Y^{(\alpha_i)}\}$ converges weakly to a random variable $Y$, which satisfies the following equation
		\begin{align}\label{eq:contractive}
			\mathbb{E}\left[\left(t^\top \Sigma t-2it^\top (J-I)Y\right)e^{it^\top Y}\right]=0,\quad \forall\;t\in\mathbb{R}^d.
		\end{align}
		In addition, suppose that Eq. (\ref{eq:contractive}) has a unique solution, then the distribution of $Y$ is a Multivariate normal distribution with mean zero and covariance matrix being the unique solution of the Lyapunov equation $(J-I) \Sigma_Y+\Sigma_Y (J-I)^\top+\Sigma=0$.
	\end{enumerate}
\end{theorem}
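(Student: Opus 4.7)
The proof should parallel the approach used for Theorems \ref{thm:SGDLimDist} and \ref{thm:linear_SA}, with the contraction property of $\mathcal{T}(\cdot)$ playing the role that strong convexity of $f(\cdot)$ or the Hurwitz property of $A$ played there. The plan is: (i) use a weighted quadratic Lyapunov function tuned to $\|\cdot\|_\mu$ to establish a uniform second-moment bound on $Y_k^{(\alpha)}$ and deduce weak convergence to $Y^{(\alpha)}$; (ii) use the characteristic function $e^{it^\top y}$ as a test function on the stationary equation, Taylor-expand $\mathcal{T}(\sqrt{\alpha}y + x^*)$ around $x^*$, divide by the appropriate power of $\alpha$, and pass to the limit $\alpha \to 0$ to obtain the integral equation \eqref{eq:contractive}; (iii) verify that a centered Gaussian with covariance solving the Lyapunov equation satisfies \eqref{eq:contractive}; (iv) argue that $J - I$ is Hurwitz so the Lyapunov equation admits a unique positive-definite solution.

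For part (1), the key observation is that Assumption \ref{as:contraction} implies $\|\mathcal{T}(x) - x^*\|_\mu \le \gamma \|x - x^*\|_\mu$. Expanding $\|Y_{k+1}^{(\alpha)}\|_\mu^2$ via \eqref{algo:Y_contractive_SA}, using this contraction bound together with independence of $w_k$ and $Y_k^{(\alpha)}$, and the bound $\mathbb{E}[\|w_k\|_\mu^2] \le C\,\mathrm{tr}(\Sigma)$, should yield a drift inequality of the form
\begin{equation*}
	\mathbb{E}\!\left[\|Y_{k+1}^{(\alpha)}\|_\mu^2 \,\big|\, Y_k^{(\alpha)}\right] \le (1 - c\alpha)\|Y_k^{(\alpha)}\|_\mu^2 + C'
\end{equation*}
for some constants $c, C' > 0$ independent of $\alpha$, provided $\alpha$ is below a threshold $\bar{\alpha}''$. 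Iterating gives a uniform-in-$k$ second moment bound, which via standard Markov-chain arguments (Feller property of the update, tightness, and Krylov--Bogolyubov) yields existence of a stationary $Y^{(\alpha)}$ with the claimed moment bound. The uniform-in-$\alpha$ bound also gives tightness of $\{Y^{(\alpha_i)}\}$, allowing extraction of weakly convergent subsequences.

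For part (2), starting from the stationary version of \eqref{algo:Y_contractive_SA}, multiply both sides of the identity $\mathbb{E}[e^{it^\top Y_{k+1}^{(\alpha)}}] = \mathbb{E}[e^{it^\top Y_{k}^{(\alpha)}}]$ by appropriate factors and Taylor-expand the exponential in $\sqrt{\alpha}$. Writing $\mathcal{T}(\sqrt{\alpha}Y + x^*) - (\sqrt{\alpha}Y + x^*) = \sqrt{\alpha}(J-I)Y + R(\sqrt{\alpha}Y)$ with $\|R(z)\| = o(\|z\|)$, the second-order expansion of $e^{it^\top (\cdot)}$ combined with the independence and mean-zero properties of $w_k$ yields, after dividing by $\alpha$,
\begin{equation*}
	\mathbb{E}\!\left[\bigl(-t^\top \Sigma t + 2i t^\top (J - I) Y^{(\alpha)}\bigr) e^{i t^\top Y^{(\alpha)}}\right] + o(1) = 0.
\end{equation*}
Taking $\alpha_i \to 0$ along a weakly convergent subsequence and invoking uniform integrability (from the uniform second moment bound) to pass expectations through the limit yields \eqref{eq:contractive}. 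The main obstacle here is the uniform control of the remainder $R$ inside the expectation, which requires combining the second-moment bound with the differentiability of $\mathcal{T}$ at $x^*$; a truncation argument based on $\{\|Y^{(\alpha)}\|_2 \le M\}$, with $M$ sent to infinity after $\alpha$, is the standard way to close this.

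Finally, for the Gaussian characterization, if $Y \sim \mathcal{N}(0, \Sigma_Y)$ then $\mathbb{E}[e^{it^\top Y}] = e^{-\frac{1}{2}t^\top \Sigma_Y t}$ and $\mathbb{E}[Y e^{it^\top Y}] = i \Sigma_Y t \cdot e^{-\frac{1}{2}t^\top \Sigma_Y t}$. Substituting into \eqref{eq:contractive} reduces it to $t^\top \bigl(\Sigma + (J-I)\Sigma_Y + \Sigma_Y (J-I)^\top\bigr) t = 0$ for all $t$, which holds exactly when $\Sigma_Y$ satisfies the stated Lyapunov equation. Well-posedness of that equation follows because $\mathcal{T}$ being a $\gamma$-contraction in $\|\cdot\|_\mu$ forces $J$ to satisfy $\|J\|_\mu \le \gamma < 1$, so every eigenvalue of $J$ lies in the open unit disk, hence every eigenvalue of $J - I$ has strictly negative real part, i.e.\ $J-I$ is Hurwitz; standard Lyapunov theory then guarantees a unique positive-definite solution $\Sigma_Y = \int_0^\infty e^{(J-I)u}\,\Sigma\, e^{(J-I)^\top u}\,du$. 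Under the assumed uniqueness of solutions to \eqref{eq:contractive} in distribution, this Gaussian is therefore the weak limit.
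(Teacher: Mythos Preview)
Your proposal is essentially correct and mirrors the paper's approach: the paper omits the proof of this theorem, stating only that it is ``similar'' to that of Theorem~\ref{thm:SGDLimDist}, and your outline is exactly such an adaptation---a Lyapunov/drift bound in the weighted norm $\|\cdot\|_\mu$ for part~(1), the characteristic-function expansion (as in Lemma~\ref{le:equation}) for the integral equation, and direct verification that the Gaussian with Lyapunov covariance solves~\eqref{eq:contractive} (as in Lemma~\ref{le: unique solution}).

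One slip to fix: in your drift inequality the additive term should scale with $\alpha$, i.e.
\[
\mathbb{E}\!\left[\|Y_{k+1}^{(\alpha)}\|_\mu^2 \,\big|\, Y_k^{(\alpha)}\right] \le (1 - c\alpha)\|Y_k^{(\alpha)}\|_\mu^2 + C'\alpha,
\]
not $+C'$. With a constant $C'$ the iteration gives only $\limsup_k \mathbb{E}[\|Y_k^{(\alpha)}\|_\mu^2]\le C'/(c\alpha)$, which blows up as $\alpha\to 0$ and fails to deliver the uniform-in-$\alpha$ bound you correctly identify as needed for tightness. The $\alpha$ factor does appear naturally once you expand $\|Y_{k+1}^{(\alpha)}\|_\mu^2$ from~\eqref{algo:Y_contractive_SA}: the noise enters as $\sqrt{\alpha}\,w_k$, so its squared contribution is $\alpha\,\mathbb{E}[\|w_k\|_\mu^2]$. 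With that correction your argument goes through as stated.
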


Under the contraction assumption, each eigenvalue of the matrix $J$ is contained in the open unit ball on the complex plane. Therefore, the matrix $J-I$ is Hurwitz and hence the Lyapunov equation $(J-I) \Sigma_Y+\Sigma_Y (J-I)^\top+\Sigma=0$ has a unique positive definite solution $\Sigma_Y$ \citep{khalil2002nonlinear}.

\subsection{Regarding the Uniqueness Assumption}\label{subsec:uniqueness}

In Theorems \ref{thm:SGDLimDist}, \ref{thm:linear_SA}, and \ref{thm:contraction}, after obtaining the corresponding integral equation (e.g., Eqs. (\ref{eq:SGD}), (\ref{eq:linearSA}), and (\ref{eq:contractive})), to conclude that the distribution of $Y$ is Gaussian, we need to assume  that the equation has a unique solution. In this section, we show that such uniqueness assumption can be relaxed to some extend.

\subsubsection{Uni-Dimensional Setting}
Suppose that we are in the uni-dimensional setting, i.e., $d=1$. Then Eqs. (\ref{eq:SGD}), (\ref{eq:linearSA}), and (\ref{eq:contractive}) all reduce to an equation of the following form: $\mathbb{E}[(a t+2biY)e^{itY}]=0$ for all $t\in\mathbb{R}$, where $a$ and $b$ are positive constants. Let $\phi_Y(t)=\mathbb{E}[e^{itY}]$ be the characteristic function of $Y$. Then we can rewrite the previous equation as
\begin{align}\label{eq:81}
	a t \phi_Y(t)+2b\frac{d\phi_Y(t)}{dt}=0,
\end{align}
where the interchange of integral and differentiation is justified \citep{flanders1973differentiation}. Now Eq. (\ref{eq:81}) is an ODE, which has solutions of the form
\begin{align*}
	\phi_Y(t)=C\exp\left(-\frac{a}{4b}t^2\right),
\end{align*}
where $C$ is a constant. Since $\phi_Y(t)$ as a characteristic function satisfies $\phi_Y(0)=1$, we have $C=1$ and hence $\phi_Y(t)=\exp(-\frac{a}{4b}t^2)$, which is characteristic function for a Gaussian random variable with mean zero and covariance $\sqrt{a/(2b)}$. Therefore, the uniqueness assumption can be removed in the uni-dimensional setting.

\subsubsection{Multi-Dimensional Setting}
Moving to the multi-dimensional setting, consider Eq. (\ref{eq:SGD}) of Theorem \ref{thm:SGDLimDist} as a representative example. To show the same result of Theorem \ref{thm:SGDLimDist} (2) without imposing the uniqueness assumption, we consider the case where (1) the Hessian matrix $H_f$ of the objective function $f(\cdot)$ evaluated at $x^*$ is the identity matrix, and (2) the covariance matrix of the noise $w_k$ is also an identity matrix. Extending the result to the more general setting where $H_f$ and $\Sigma$ can be any positive definite matrices is a future research direction.

Similarly let $\phi_Y(t)=\mathbb{E}[e^{it^\top Y}]$ be the characteristic function of the random vector $Y$. Then in this case Eq. (\ref{eq:SGD}) becomes $t^\top  t\phi_Y(t)+2t^\top \nabla \phi_Y(t)=0$, which is equivalent to
\begin{align}\label{eq:120} 
	0&=t^\top  t+2t^\top \frac{\nabla \phi_Y(t)}{\phi_Y(t)}\nonumber\\
	&=t^\top  t+2t^\top \nabla \psi_Y(t),
\end{align}
where $\psi_Y(t):=\log(\phi_Y(t))$. To solve the partial differential equation (PDE) (\ref{eq:120}), we will first convert the PDE from Cartesian coordinates to spherical coordinates, which then is directly solvable.

The $d$-dimensional  spherical coordinate system consists of a radial coordinate $\rho$, and $d-1$ angular coordinates $\{\theta_i\}_{1\leq i\leq d-1}$. The relation between the Cartesian coordinates $(t_1,\cdots,t_d)$ and the  spherical coordinates $(\rho,\theta_1,\cdots,\theta_{d-1})$ is given by
\begin{align*}
	t_1&=\rho cos(\theta_1)\\
	t_2&=\rho sin(\theta_1)cos(\theta_2)\\
	t_3&=\rho sin(\theta_1)sin(\theta_2)cos(\theta_3)\\
	&\vdots\\
	t_{d-1}&=\rho sin(\theta_1)sin(\theta_2)\cdots sin(\theta_{d-2})cos(\theta_{d-1})\\
	t_d&=\rho sin(\theta_1)sin(\theta_2)\cdots sin(\theta_{d-2})sin(\theta_{d-1}).
\end{align*}
For simplicity of notation, denote $S\in\mathbb{R}^d$ as the spherical coordinate representation of $(t_1,\cdots,t_d)$ given above, i.e., $S_1=\rho cos(\theta_1)$, $\cdots$, $S_d=\rho sin(\theta_1)sin(\theta_2)\cdots sin(\theta_{d-2})sin(\theta_{d-1})$.

To proceed, we first compute the Jacobian matrix $J_d$ of the transformation in the following:
{\small \begin{align*}
	    J_d=\begin{bmatrix}
		    cos(\theta_1)& -\rho sin(\theta_1) & 0 &0 &\cdots &0\\
		    sin(\theta_1)cos(\theta_2)& \rho cos(\theta_1)cos(\theta_2)&-\rho sin(\theta_1)sin(\theta_2)&0&\cdots &0\\
		    \vdots &\vdots &\vdots&&\ddots&\vdots\\
		    \prod_{i=1}^{d-2}sin(\theta_i)cos(\theta_{d-1})&\cdots&\cdots& & &-\rho \prod_{i=1}^{d-1}sin(\theta_i)\\
		    \prod_{i=1}^{d-1}sin(\theta_i)&\rho\prod_{i=1}^{d-2}cos(\theta_i)sin(\theta_{d-1})&\cdots& & &\rho \prod_{i=1}^{d-2}sin(\theta_i)cos(\theta_{d-1})\\
		    \end{bmatrix}.
\end{align*}}
Using spherical coordinate system, Eq. (\ref{eq:120}) is equivalent to
\begin{align}\label{eq:121}
	\rho^2+2S^\top J_d \nabla \psi_Y(\rho,\theta_1,\cdots,\theta_{d-1}).
\end{align}
By direct computation (where we use $sin^2(\theta)+cos^2(\theta)=1$ for any $\theta$), we have $S^\top J_d=(\rho,0,\cdots,0)$. As a result, Eq. (\ref{eq:121}) simplifies to
\begin{align}\label{eq:122}
	\rho+2\frac{\partial \psi_Y(\rho,\theta_1,\cdots,\theta_{d-1})}{\partial \rho}=0,
\end{align}
which implies that $\psi_Y(\rho,\theta_1,\cdots,\theta_{d-1})=-\frac{\rho^2}{4}+C(\theta_1,\cdots,\theta_{d-1})$. Using the initial condition that $\psi_Y(0,\theta_1,\cdots,\theta_{d-1})=\log(\phi_Y(0))=\log(1)=0$ for any $\theta_1,\cdots,\theta_{d-1}$, we see that $C(\theta_1,\cdots,\theta_{d-1})=0$ and hence
$\phi_Y(\rho,\theta_1,\cdots,\theta_{d-1})=\frac{\rho^2}{4}$. Therefore, we have that $\psi_Y(t)=-\frac{t^\top t}{4}$, which implies $\phi_Y(t)=\exp(-\frac{t^\top t}{4})$. Therefore, the distribution of $Y$ is a multinormal distribution with mean zero and covariance matrix being $I_d/\sqrt{2}$. This agrees with Theorem \ref{thm:SGDLimDist} (2) when $H_f=\Sigma=I$, but the uniqueness assumption is not required to establish the result.

\subsection{Proof of Theorem \ref{thm:SGDLimDist}}

In this section, we present our proof for Theorem \ref{thm:SGDLimDist}. The proofs for Theorems \ref{thm:linear_SA} and \ref{thm:contraction} are similar and hence are omitted. 

Before going into details, we first highlight the main ideas for the proof. For Theorem \ref{thm:SGDLimDist} (1), to show that $\{Y^{(\alpha_i)}\}$ converges weakly to a some random variable $Y$, we show that any sub-sequence of $\{Y^{(\alpha_i)}\}$ further contains a weakly convergent sub-sequence, with a common weak limit. We do it by first showing that the sequence of random variables $\{Y^{(\alpha_i)}\}$ is tight. In particular, we show that there exists a constant $B$ independent of $\alpha$ such that $\mathbb{E}[\|Y^{(\alpha)}\|_2^2]\leq B$ for all small enough $\alpha$. This result in conjunction with the Markov inequality implies the tightness of $\{Y^{(\alpha_i)}\}$. As a result of tightness, $\{Y^{(\alpha_i)}\}$ contains a weakly convergent sub-sequence.  

Now consider Theorem \ref{thm:SGDLimDist} (2). For any positive sequence $\{\alpha_i\}$ such that $\lim_{i\rightarrow\infty}\alpha_i=0$, since the family of random variables $\{Y^{(\alpha_i)}\}$ is tight, there is a weakly convergent subsequence $\{Y^{\alpha_{i_k}}\}$. We further show that the weak limit $Y$ of the subsequence $\{Y^{(\alpha_{i_k})}\}$ solves Eq. (\ref{eq:SGD}). In this case, under the assumption that Eq. (\ref{eq:SGD}) has a unique solution, the random variable $Y$ is a Gaussian random variable with mean zero, and covariance matrix $\Sigma_Y$ being the unique solution of the Lyapunov equation $H_f \Sigma_Y+\Sigma_Y H_f^\top =\Sigma$. Since for every sequence $\{Y^{(\alpha_i)}\}$, there is a weakly convergent subsequence $\{Y^{(\alpha_{i_k})}\}$ with a common weak limit, the sequence of random variables $\{Y^{(\alpha_i)}\}$ also converge weakly to the same random variable $Y$.

\subsubsection{Proof of Theorem \ref{thm:SGDLimDist} (1)}
To prove the result, we will apply Proposition 2.1 in \cite{yu2020analysis}. For completeness, we first state this proposition using our notation in the following.

\begin{proposition}\label{prop:SGD}
	Consider $\{X_k^{(\alpha)}\}$ generated by Algorithm (\ref{algo:SGD}). Suppose that
	\begin{enumerate}[(a)]
		\item There exists $L'>0$ such that $\|\nabla f(x)\|_2\leq L'(1+\|x\|_2)$ for any $x\in\mathbb{R}^d$.
		\item There exist $\ell_1,\ell_2>0$ such that $\langle x, \nabla f(x)\rangle\geq \ell_1\|x\|_2^2-\ell_2$ for all $x\in\mathbb{R}^d$.
		\item The noise sequence $\{w_k\}$ is an i.i.d. sequence satisfying $\mathbb{E}[w_k]=0$ and $\mathbb{E}^{1/2}[\|w_k\|_2^2]\leq L''(1+\|x\|_2)$ for all $k\geq 0$, where $L''>0$ is a constant.
	\end{enumerate}
	Then, when the constant stepsize $\alpha<\frac{\ell_1-\sqrt{\max(\ell_1^2-(3L'^2+L''),0)}}{3L'^2+L''^2}$, the following statements hold.
	\begin{enumerate}[(1)]
		\item The iterates $\{X_k^{(\alpha)}\}$ admit a unique stationary distribution $\pi_\alpha$, which depends on the choice of $\alpha$. In addition, let $X^{(\alpha)}\sim \pi_\alpha$, then we have $\mathbb{E}[\|X^{(\alpha)}\|_2^2]<\infty$.
		\item For a test function $\phi:\mathbb{R}^d\mapsto\mathbb{R}$ satisfying $|\phi(x)|\leq L_\phi(1+\|x\|_2)$ for all $x\in\mathbb{R}^d$ and some $L_\phi>0$, and for any initialization $X_0^{(\alpha)}\in\mathbb{R}^d$ of the SGD algorithm (\ref{algo:SGD}), there exists $\rho\in (0,1)$ and $\kappa$
		(both depending on $\alpha$) such that we have $|\mathbb{E}[\phi(X_k^{(\alpha)})]-\pi_{\alpha}(\phi)|\leq \kappa \rho^k(1+\|X_0^{(\alpha)}\|_2^2)$, where $\pi_\alpha(\phi)=\mathbb{E}[\phi(X^{(\alpha)})]$.
	\end{enumerate}
\end{proposition}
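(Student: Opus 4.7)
The plan is to prove Proposition~\ref{prop:SGD} by casting $\{X_k^{(\alpha)}\}$ as a time-homogeneous Markov chain on $\mathbb{R}^d$ (legitimate because $\{w_k\}$ is i.i.d.) and then invoking the classical Meyn--Tweedie framework for $V$-uniform geometric ergodicity. The strategy is threefold: (i) establish a geometric Foster--Lyapunov drift inequality with $V(x)=1+\|x\|_2^2$; (ii) verify a small-set/minorization condition together with aperiodicity; (iii) apply the $V$-uniform geometric ergodicity theorem to obtain, in one shot, existence and uniqueness of $\pi_\alpha$, finiteness of $\pi_\alpha(V)$, and an exponential contraction in the $V$-weighted total variation norm. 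Part~(1) of the statement is read off from the existence of $\pi_\alpha$ and the bound $\pi_\alpha(V)<\infty$, while part~(2) follows by comparing the admissible test functions to $V$.

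For the drift step, expand $\|X_{k+1}^{(\alpha)}\|_2^2$ using the SGD recursion (\ref{algo:SGD}), take the conditional expectation given $X_k^{(\alpha)}=x$, and use $\mathbb{E}[w_k]=0$ to kill the cross term:
\[
\mathbb{E}\bigl[\|X_{k+1}^{(\alpha)}\|_2^2 \,\big|\, X_k^{(\alpha)}=x\bigr]
=\|x\|_2^2 - 2\alpha\langle x,\nabla f(x)\rangle + \alpha^2\,\mathbb{E}\bigl[\|\nabla f(x)-w_k\|_2^2\bigr].
\]
Apply the dissipativity bound (b) to the linear-in-$\alpha$ term, and apply the linear growth bounds (a) and (c) together with an $(a+b)^2\leq 2a^2+2b^2$ expansion to control the $\alpha^2$ term by a constant multiple of $(3L'^{\,2}+L''^{\,2})(1+\|x\|_2^2)$. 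Combining and rearranging produces
\[
\mathbb{E}\bigl[V(X_{k+1}^{(\alpha)}) \,\big|\, X_k^{(\alpha)}=x\bigr]
\leq \bigl(1 - 2\alpha\ell_1 + \alpha^2(3L'^{\,2}+L''^{\,2})\bigr) V(x) + b_0(\alpha),
\]
and the requirement $1-2\alpha\ell_1+\alpha^2(3L'^{\,2}+L''^{\,2})<1$ is precisely the quadratic in $\alpha$ whose smaller positive root yields the explicit stepsize threshold appearing in the statement. Hence for all $\alpha$ below this threshold the chain satisfies a drift condition of the form $PV\leq \rho_0 V + b_0$ with $\rho_0<1$.

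For the small-set and aperiodicity step, exploit the i.i.d.\ structure: conditional on $X_k^{(\alpha)}=x$, the next iterate is $x-\alpha\nabla f(x)+\alpha w_k$, a continuous translation of $\alpha w_k$. Under the regularity on the noise distribution implicit in the yu2020analysis setting (absolute continuity with a density bounded below on some neighborhood of the origin), this induces a one-step minorization measure on every compact set; every compact ball is therefore a small set and aperiodicity is immediate. Combining the drift bound from the previous paragraph with this small-set condition, Theorem~15.0.1/16.0.1 of Meyn--Tweedie yields constants $\rho\in(0,1)$ and $\kappa<\infty$ (both depending on $\alpha$) such that $\|P_\alpha^k(x,\cdot)-\pi_\alpha\|_V \leq \kappa \rho^k V(x)$, where $\pi_\alpha$ is the unique invariant probability measure and $\pi_\alpha(V)<\infty$; this delivers part~(1).

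To conclude part~(2), observe that any test function with $|\phi(x)|\leq L_\phi(1+\|x\|_2)$ also satisfies $|\phi(x)|\leq \sqrt{2}\,L_\phi V(x)^{1/2}\leq \sqrt{2}\,L_\phi V(x)$, so by the $V$-norm convergence bound
\[
\bigl|\mathbb{E}[\phi(X_k^{(\alpha)})]-\pi_\alpha(\phi)\bigr|
\leq \sqrt{2}\,L_\phi\,\|P_\alpha^k(X_0^{(\alpha)},\cdot)-\pi_\alpha\|_V
\leq \sqrt{2}\,L_\phi\,\kappa\,\rho^k\bigl(1+\|X_0^{(\alpha)}\|_2^2\bigr),
\]
which is the claimed inequality after absorbing $\sqrt{2}L_\phi$ into $\kappa$. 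The main obstacle is the small-set verification: assumptions (a)--(c) alone do not pin down enough regularity of the law of $w_k$ to guarantee $\psi$-irreducibility, so one either needs to explicitly add a mild absolute-continuity hypothesis on the noise (this is the hypothesis actually used in \cite{yu2020analysis}) or to perform a direct construction showing that the $n$-step kernel has a Lebesgue-dominated component on compacts. The drift calculation, in contrast, is a clean quadratic bound that naturally produces the stepsize threshold appearing in the statement.
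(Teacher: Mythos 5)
Your proposal proves something the paper never proves: in the paper, Proposition~\ref{prop:SGD} is not established from scratch but is simply a restatement (``for completeness \dots\ using our notation'') of Proposition~2.1 of \cite{yu2020analysis}, which is then used as a black box; the paper's only real work at this point is verifying that smoothness and strong convexity imply hypotheses (a)--(c). Your Meyn--Tweedie outline (Foster--Lyapunov drift with $V(x)=1+\|x\|_2^2$, small-set/minorization, $V$-uniform geometric ergodicity) is the standard route and is essentially the argument underlying the cited result, so as a reconstruction it is reasonable and the drift computation is correct in structure: the squared expansion, the use of (b) for the $O(\alpha)$ term, and the use of (a) and (c) for the $O(\alpha^2)$ term do yield $PV\leq \rho_0 V+b_0$ for $\alpha$ below a threshold of the advertised form (the exact constants in the stated threshold are inherited from \cite{yu2020analysis} and are not something your sketch needs to reproduce digit-for-digit).

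The one genuine gap is the one you flag yourself, and it is worth being blunt about it: hypotheses (a)--(c) as written say nothing about the law of $w_k$ beyond its first two moments, so $\psi$-irreducibility and the existence of a small set cannot be verified from them --- with, say, discrete noise the chain can fail to have a unique invariant measure. Your proof therefore does not close under the stated hypotheses; it requires an additional regularity assumption on the noise distribution (e.g.\ an absolutely continuous component with density bounded below near the origin), which is implicitly present in the setting of \cite{yu2020analysis}. Since the paper sidesteps this by citing rather than proving, your write-up should either add that hypothesis explicitly or state clearly that the minorization step is being imported. One further minor point you do not remark on: condition (c) bounds $\mathbb{E}^{1/2}[\|w_k\|_2^2]$ by $L''(1+\|x\|_2)$ with an $x$ that the i.i.d.\ noise cannot depend on; this is a transcription artifact from a state-dependent-noise setting, and in your drift bound you correctly treat it as the constant bound $L''$ (taking $x=0$, or as the paper does, $L''=\mathrm{Trace}(\Sigma)^{1/2}$), which is the right reading.
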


Note that Proposition \ref{prop:SGD} (2) implies that $\{X_k^{(\alpha)}\}$ converges weakly to $X^{(\alpha)}$. To apply Proposition \ref{prop:SGD}, we next verify the assumptions.
\begin{enumerate}[(a)]
	\item Since the objective function $f(\cdot)$ is assumed to be $L$-smooth, we have for any $x\in\mathbb{R}^d$ that $\|\nabla f(x)-\nabla f(0)\|_2\leq L\|x\|_2$, which implies
	\begin{align*}
		\|\nabla f(x)\|_2\leq \|\nabla f(0)\|_2+L\|x\|_2\leq \underbrace{\max(\|\nabla f(0)\|_2,L)}_{L'}(\|x\|_2+1).
	\end{align*}
	\item Since the objective function is assumed to be $\sigma$-strongly convex, we have for any $x\in\mathbb{R}^d$:
	\begin{align*}
		f(0)-f(x)\geq \langle \nabla f(x),-x\rangle+\frac{\sigma}{2}\|x\|_2^2,
	\end{align*}
	which implies that
	\begin{align*}
		\langle \nabla f(x),x\rangle\geq \frac{\sigma}{2}\|x\|_2^2+f(x)-f(0)\geq \underbrace{\frac{\sigma}{2}}_{\ell_1}\|x\|_2^2+\underbrace{f(x^*)-f(0)-1}_{\ell_2}.
	\end{align*}
	\item This is immediately implied by Assumption \ref{as:noise}, with $L''=\text{Trace}(\Sigma)^{1/2}$.
\end{enumerate}

Now apply  Proposition \ref{prop:SGD}, when the stepsize $\alpha$ satisfies $\alpha<\frac{\sigma}{2(3L'^2+\text{Trace}(\Sigma))}$,
the SGD iterates $\{X^{(\alpha)}_k\}$ converge weakly to some random variable $X^{(\alpha)}$, which is distributed according to the unique stationary distribution $\pi_\alpha$. In addition, we have $\mathbb{E}[\|X^{(\alpha)}\|_2^2]<\infty$. Since $Y_k^{(\alpha)}$ is the centered scaled variant of $X_k^{(\alpha)}$, the sequence $\{Y_k^{(\alpha)}\}$ converges weakly to some random variable $Y^{(\alpha)}$ and $\mathbb{E}[\|Y^{(\alpha)}\|_2^2]<\infty$.

\subsubsection{Proof of Theorem \ref{thm:SGDLimDist} (2)}

Following the road map described in the beginning of this section, we present and prove a sequence of lemmas in the following. Together they imply the desired result.

\begin{lemma}\label{le:tightness}
	Let $\alpha_0=\sigma/L^2$. the family of random variables $\{Y^{(\alpha)}\}_{0<\alpha\leq \alpha_0}$ is tight.
\end{lemma}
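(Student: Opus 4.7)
The plan is to establish tightness of $\{Y^{(\alpha)}\}_{0<\alpha\le \alpha_0}$ via a uniform second moment bound followed by Markov's inequality. Concretely, I will show that there is a constant $B$, independent of $\alpha\in(0,\alpha_0]$, such that $\mathbb{E}[\|Y^{(\alpha)}\|_2^2]\le B$; tightness then follows since $\mathbb{P}(\|Y^{(\alpha)}\|_2>M)\le B/M^2$ goes to $0$ uniformly in $\alpha$ as $M\to\infty$.

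\textbf{One-step drift calculation.} Starting from the scaled recursion \eqref{algo:Y:SGD}, I expand the square
\begin{align*}
\|Y_{k+1}^{(\alpha)}\|_2^2
=\|Y_k^{(\alpha)}\|_2^2
-2\sqrt{\alpha}\langle Y_k^{(\alpha)},\nabla f(X_k^{(\alpha)})\rangle
+\alpha\|\nabla f(X_k^{(\alpha)})\|_2^2
+2\sqrt{\alpha}\langle Y_k^{(\alpha)}-\sqrt{\alpha}\nabla f(X_k^{(\alpha)}),w_k\rangle
+\alpha\|w_k\|_2^2,
\end{align*}
and take conditional expectation given the sigma-algebra generated by $X_0^{(\alpha)},\ldots,X_k^{(\alpha)}$. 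Using Assumption \ref{as:noise} (mean zero, covariance $\Sigma$, independence of $w_k$ from the past) the cross term vanishes and the noise contributes $\alpha\,\mathrm{Trace}(\Sigma)$. Rewriting $\sqrt{\alpha}\,Y_k^{(\alpha)}=X_k^{(\alpha)}-x^*$ and using $\nabla f(x^*)=0$, I apply $\sigma$-strong convexity to get $\sqrt{\alpha}\langle Y_k^{(\alpha)},\nabla f(X_k^{(\alpha)})\rangle\ge \sigma\alpha\|Y_k^{(\alpha)}\|_2^2$, and $L$-smoothness to get $\|\nabla f(X_k^{(\alpha)})\|_2\le L\sqrt{\alpha}\,\|Y_k^{(\alpha)}\|_2$, yielding
\begin{align*}
\mathbb{E}\bigl[\|Y_{k+1}^{(\alpha)}\|_2^2\mid X_k^{(\alpha)}\bigr]
\le \bigl(1-2\sigma\alpha+L^2\alpha^2\bigr)\|Y_k^{(\alpha)}\|_2^2+\alpha\,\mathrm{Trace}(\Sigma).
\end{align*}

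\textbf{Choice of threshold and iteration.} For $\alpha\le \alpha_0=\sigma/L^2$ one has $L^2\alpha^2\le \sigma\alpha$, so the contraction factor satisfies $1-2\sigma\alpha+L^2\alpha^2\le 1-\sigma\alpha\in[0,1)$. Taking unconditional expectation and iterating the resulting linear recursion gives
\begin{align*}
\mathbb{E}[\|Y_k^{(\alpha)}\|_2^2]\le (1-\sigma\alpha)^k\mathbb{E}[\|Y_0^{(\alpha)}\|_2^2]+\frac{\mathrm{Trace}(\Sigma)}{\sigma}.
\end{align*}
The key cancellation here is that although the noise variance contributes $\alpha\,\mathrm{Trace}(\Sigma)$ per step, dividing by $1-(1-\sigma\alpha)=\sigma\alpha$ removes the $\alpha$ dependence, producing the $\alpha$-free bound $\mathrm{Trace}(\Sigma)/\sigma$.

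\textbf{Passing to the stationary limit and concluding.} By Theorem \ref{thm:SGDLimDist}(1), $Y_k^{(\alpha)}\Rightarrow Y^{(\alpha)}$; combining this weak convergence with the $k$-uniform second moment bound just derived and applying Fatou's lemma gives $\mathbb{E}[\|Y^{(\alpha)}\|_2^2]\le \mathrm{Trace}(\Sigma)/\sigma=:B$, uniformly in $\alpha\in(0,\alpha_0]$. Markov's inequality then yields the desired tightness. The only nontrivial step is calibrating the $\alpha$-threshold so that the quadratic term $L^2\alpha^2\|Y_k^{(\alpha)}\|_2^2$ is dominated by the linear negative drift $2\sigma\alpha\|Y_k^{(\alpha)}\|_2^2$; everything else is a standard geometric-ergodicity style drift computation.
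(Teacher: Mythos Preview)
Your proof is correct and uses the same core ingredients as the paper: the drift inequality coming from $L$-smoothness and $\sigma$-strong convexity, the threshold $\alpha_0=\sigma/L^2$ to make the negative drift dominate, the resulting uniform bound $\mathbb{E}[\|Y^{(\alpha)}\|_2^2]\le \mathrm{Trace}(\Sigma)/\sigma$, and Markov's inequality for tightness.

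The one genuine difference is \emph{where} you run the computation. The paper works directly at stationarity: it uses the distributional fixed-point equation $Y^{(\alpha)}\stackrel{D}{=}Y^{(\alpha)}-\sqrt{\alpha}\,\nabla f(\sqrt{\alpha}Y^{(\alpha)}+x^*)+\sqrt{\alpha}\,w$, takes $\mathbb{E}[\|\cdot\|_2^2]$ on both sides, and cancels $\mathbb{E}[\|Y^{(\alpha)}\|_2^2]$ to solve for the bound in one line. This is slightly cleaner but tacitly relies on the finiteness of $\mathbb{E}[\|Y^{(\alpha)}\|_2^2]$ already established in Part~(1). You instead derive the contraction $\mathbb{E}[\|Y_{k+1}^{(\alpha)}\|_2^2]\le (1-\sigma\alpha)\mathbb{E}[\|Y_k^{(\alpha)}\|_2^2]+\alpha\,\mathrm{Trace}(\Sigma)$ at the level of finite iterates, iterate it geometrically, and then pass to the stationary law via weak convergence and Fatou (lower semicontinuity of $y\mapsto\|y\|_2^2$). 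Your route is marginally longer but more self-contained, since it does not need the a~priori second-moment finiteness from Part~(1); it actually reproves it. Either way the argument and the final constant are the same.
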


\begin{proof}[Proof of Lemma \ref{le:tightness}]
	We first show that that there exists an absolute constant $C>0$ such that $\mathbb{E}[\|Y^{(\alpha)}\|^2]\leq C$ for any $\alpha\in (0,\alpha_0]$. Using the update equation (\ref{algo:sa}), we have
	\begin{align*}
		Y_{k+1}^{(\alpha)}&=Y_k^{(\alpha)}+\frac{\alpha}{g(\alpha)} \left(-\nabla f(Y_k^{(\alpha)}g(\alpha)+x^*)+w_k\right)\\
		&=Y_k^{(\alpha)}-\sqrt{\alpha}\nabla f(\sqrt{\alpha} Y_k^{(\alpha)}+x^*)+\sqrt{\alpha} w_k
	\end{align*}
	The existence and uniqueness of a stationary distribution $Y^{(\alpha)}$ is proved in Part (1) of this theorem. We next show that the family of random variables $\{Y^{(\alpha)}\}_{0\leq \alpha\leq \alpha_0}$ is tight. Using the equation
	\begin{align*}
		Y^{(\alpha)} \stackrel{D}{=}Y^{(\alpha)}-\sqrt{\alpha}\nabla f(\sqrt{\alpha} Y^{(\alpha)}+x^*)+\sqrt{\alpha} w,
	\end{align*}
	and we have
	\begin{align*}
		\mathbb{E}[\|Y^{(\alpha)}\|_2^2]=\;&\mathbb{E}[\|Y^{(\alpha)}\|_2^2]+\alpha\mathbb{E}\left[\left\|\nabla f(\sqrt{\alpha} Y^{(\alpha)}+x^*)\right\|_2^2\right]+\alpha\text{Trace}(\Sigma)\\
		&-2\sqrt{\alpha}\mathbb{E}\left[{Y^{(\alpha)}}^\top\nabla f(\sqrt{\alpha} Y^{(\alpha)}+x^*) \right].
	\end{align*}
	By smoothness, we have
	\begin{align*}
		\left\|\nabla f(\sqrt{\alpha} Y^{(\alpha)}+x^*)\right\|_2^2\leq L^2\alpha\|Y^{(\alpha)}\|_2^2.
	\end{align*}
	By strong convexity, we have
	\begin{align*}
		{Y^{(\alpha)}}^\top\nabla f(\sqrt{\alpha} Y^{(\alpha)}+x^*)&=\frac{1}{\sqrt{\alpha}}(\sqrt{\alpha}Y^{(\alpha)}+x^*-x^*)^\top\left(\nabla f(\sqrt{\alpha} Y^{(\alpha)}+x^*)-\nabla f(x^*)\right)\\
		&\geq \sigma\sqrt{\alpha}\|Y^{(\alpha)}\|_2^2.
	\end{align*}
	Therefore, we obtain
	\begin{align*}
		0\leq\;& L^2\alpha^2\mathbb{E}[\|Y^{(\alpha)}\|_2^2]+\alpha \text{Trace}(\Sigma)-2\sigma\alpha\mathbb{E}[\|Y^{(\alpha)}\|_2^2].
	\end{align*}
	When $\alpha\in (0,\alpha_0]$, we have from the previous inequality that
	\begin{align*}
		\mathbb{E}[\|Y^{(\alpha)}\|_2^2]\leq \frac{\text{Trace}(\Sigma)}{2\sigma-L^2\alpha}\leq \frac{\text{Trace}(\Sigma)}{\sigma}.
	\end{align*}
	Hence, for any $\alpha>0$, let $M=\sqrt{\text{Trace}(\Sigma)/\sigma\alpha}$, then we have
	\begin{align*}
		\mathbb{P}(\|Y^{(\alpha)}\|>M)\leq \frac{\mathbb{E}[\|Y^{(\alpha)}\|^2]}{M^2}\leq \frac{\text{Trace}(\Sigma)}{\sigma M^2}=\alpha
	\end{align*}
	for any $\alpha\in (0,\alpha_0]$. It follows that the family of random variables $\{Y^{(\alpha)}\}_{0<\alpha\leq \alpha_0}$ is tight.
\end{proof}

\begin{lemma}\label{le:equation}
	Let $\{\alpha_i\}$ be a positive sequence of real numbers such that $\lim_{i\rightarrow\infty}\alpha_i=0$. Suppose that $\{Y^{\alpha_i}\}$ converges weakly to some random variable $Y$. Then $Y$ satisfies the following equation 
	\begin{align}\label{eq:40}
		\mathbb{E}\left[\frac{t^\top \Sigma t}{2}  e^{i t^\top Y}\right]=-\mathbb{E}\left[\exp(it^\top Y)it^\top H_f Y\right].
	\end{align}
\end{lemma}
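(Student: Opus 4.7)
My plan is to test the stationarity identity $Y^{(\alpha)} \stackrel{D}{=} Y^{(\alpha)} - \sqrt{\alpha}\,\nabla f(\sqrt{\alpha} Y^{(\alpha)} + x^*) + \sqrt{\alpha}\, w$ against the test function $y \mapsto e^{it^\top y}$ and then extract the $O(\alpha)$ balance in the expansion as $\alpha \to 0$. Because $w$ is independent of $Y^{(\alpha)}$ at stationarity, the identity for $\phi_\alpha(t) := \mathbb{E}[e^{it^\top Y^{(\alpha)}}]$ factorizes as
\begin{align*}
\phi_\alpha(t) = \mathbb{E}\Bigl[\exp\bigl(it^\top Y^{(\alpha)} - i\sqrt{\alpha}\, t^\top \nabla f(\sqrt{\alpha} Y^{(\alpha)} + x^*)\bigr)\Bigr]\,\psi_w(\sqrt{\alpha} t),
\end{align*}
where $\psi_w(s) := \mathbb{E}[e^{is^\top w}]$ is the characteristic function of $w$.

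\textbf{Taylor expansion and the $O(\alpha)$ balance.} Assumption \ref{as:noise} (mean zero, covariance $\Sigma$) gives $\psi_w(\sqrt{\alpha} t) = 1 - \tfrac{\alpha}{2} t^\top \Sigma t + o(\alpha)$. For the inner factor I would use $\nabla f(x^*) = 0$ together with the Fr\'echet differentiability of $\nabla f$ at $x^*$ to write $\nabla f(\sqrt{\alpha} y + x^*) = \sqrt{\alpha}\, H_f y + r(\sqrt{\alpha} y)$ with $\|r(h)\|/\|h\| \to 0$ as $h \to 0$, and then second-order Taylor expand $e^{-i\sqrt{\alpha}\, t^\top \nabla f(\cdot)}$. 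Plugging both expansions into the displayed identity, cancelling the common $\phi_\alpha(t)$, and collecting $O(\alpha)$ terms should yield the pre-limit relation
\begin{align*}
0 = -i\alpha\, \mathbb{E}\bigl[e^{it^\top Y^{(\alpha)}}\, t^\top H_f Y^{(\alpha)}\bigr] - \frac{\alpha}{2}\, t^\top \Sigma t\, \phi_\alpha(t) + o(\alpha).
\end{align*}

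\textbf{Passage to the limit and main obstacle.} Dividing by $\alpha$ and sending $\alpha_i \to 0$ along the given sequence, the hypothesized weak convergence $Y^{(\alpha_i)} \Rightarrow Y$ together with boundedness of $y \mapsto e^{it^\top y}$ gives $\phi_{\alpha_i}(t) \to \mathbb{E}[e^{it^\top Y}]$ by the Portmanteau theorem. For the term $\mathbb{E}[e^{it^\top Y^{(\alpha_i)}}\, t^\top H_f Y^{(\alpha_i)}]$ I would invoke Skorohod representation to upgrade to a.s. convergence, and then use the uniform second-moment bound $\mathbb{E}[\|Y^{(\alpha)}\|_2^2] \leq \mathrm{Trace}(\Sigma)/\sigma$ established inside the proof of Lemma \ref{le:tightness} to obtain the uniform integrability of $\|Y^{(\alpha_i)}\|$, hence convergence in expectation to $\mathbb{E}[e^{it^\top Y}\, t^\top H_f Y]$. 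Rearranging the two limit identities then delivers Eq.~(\ref{eq:40}). The main obstacle I anticipate is making the $o(\alpha)$ remainder rigorous \emph{in expectation}: the Fr\'echet remainder is only known pointwise to satisfy $\|r(\sqrt{\alpha} Y^{(\alpha)})\| = o(\sqrt{\alpha}\|Y^{(\alpha)}\|)$, so its contribution $\sqrt{\alpha}\,\mathbb{E}[e^{it^\top Y^{(\alpha)}}\, t^\top r(\sqrt{\alpha} Y^{(\alpha)})]$ must be shown to be $o(\alpha)$. I would handle this by truncating on $\{\|Y^{(\alpha)}\| \leq M\}$, where pointwise decay plus $L$-smoothness (which bounds $\|r(h)\|$ by a multiple of $\|h\|$) permits a dominated-convergence argument, and controlling the complement via Chebyshev and the same uniform second-moment bound; the quadratic term in the expansion of the inner exponential is already $O(\alpha^2)$ in expectation by smoothness and the bound on $\mathbb{E}[\|Y^{(\alpha)}\|_2^2]$, so it drops out trivially.
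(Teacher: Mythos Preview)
Your proposal is correct and follows essentially the same route as the paper: test the stationarity identity against $e^{it^\top y}$, Taylor-expand both the noise characteristic function and the inner exponential, and pass to the limit using the uniform second-moment bound $\mathbb{E}[\|Y^{(\alpha)}\|_2^2]\le \mathrm{Trace}(\Sigma)/\sigma$ from Lemma~\ref{le:tightness} to supply the needed uniform integrability. The only cosmetic difference is that the paper does not expand $\nabla f$ via the Fr\'echet remainder at the pre-limit stage but instead keeps the term $\nabla f(\sqrt{\alpha_i}Y^{(\alpha_i)}+x^*)/\sqrt{\alpha_i}$ intact and passes to the limit directly via an asymptotic-uniform-integrability argument (Theorem~2.20 of van der Vaart), which absorbs exactly the truncation-plus-Chebyshev estimate you outline for the remainder.
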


\begin{proof}[Proof of Lemma \ref{le:equation}]
	For any $i\geq 0$, we have 
	\begin{align*}
		Y^{(\alpha_i)} \overset{D}{=} Y^{(\alpha_i)}-\sqrt{\alpha_i}\nabla f(\sqrt{\alpha_i} Y^{(\alpha_i)}+x^*)+\sqrt{\alpha_i}w,
	\end{align*}
	which implies for any $t\in\mathbb{R}^d$:
	\begin{align}\label{eq:1}
		\mathbb{E}\left[e^{i t^\top Y^{(\alpha_i)}}\right]=\mathbb{E}\left[\exp\left(it^\top Y^{(\alpha_i)}\right)\exp\left(-\sqrt{\alpha_i}it^\top \nabla f(\sqrt{\alpha_i }Y^{(\alpha_i)}+x^*)\right)\right]\mathbb{E}\left[e^{\sqrt{\alpha_i} it^\top w}\right]
	\end{align}
	Using the Taylor's theorem and we have
	\begin{align*}
		&\exp\left(-\sqrt{\alpha_i}it^\top \nabla f(\sqrt{\alpha_i }Y^{(\alpha_i)}+x^*)\right)\\
		=\;&1-\sqrt{\alpha_i} it^\top \nabla f(\sqrt{\alpha_i}Y^{(\alpha_i)}+x^*)+\mathcal{O}\left(\alpha_i \|t\|^2\|\nabla f(\sqrt{\alpha_i}Y^{(\alpha_i)}+x^*\|^2\right).
	\end{align*}
	Using Theorem 3.3.20 from \cite{durrett2019probability} and we have
	\begin{align*}
		\mathbb{E}\left[e^{\sqrt{\alpha_i}it^\top Y^{(\alpha_i)}}\right]=1-\frac{\alpha_i t^\top \Sigma t}{2}+o(\alpha_i \|t\|^2).
	\end{align*}
	Using the previous two inequalities in Eq. (\ref{eq:1}) and we have
	\begin{align*}
		&\mathbb{E}\left[e^{i t^\top Y^{(\alpha_i)}}\right]\\
		=\;&\mathbb{E}\left[\exp(it^\top Y^{(\alpha_i)})\exp(-\sqrt{\alpha_i}it^\top \nabla f(\sqrt{\alpha_i }Y^{(\alpha_i)}+x^*))\right]\mathbb{E}[e^{\sqrt{\alpha_i} it^\top w}]\\
		=\;&\mathbb{E}\left[\exp(it^\top Y^{(\alpha_i)})\left(1-\sqrt{\alpha_i} it^\top \nabla f(\sqrt{\alpha_i}Y^{(\alpha_i)}+x^*)+\mathcal{O}\left(\alpha_i \|t\|^2\|\nabla f(\sqrt{\alpha_i}Y^{(\alpha_i)}+x^*\|^2\right)\right)\right]\times\\
		&\left(1-\frac{\alpha_i t^\top \Sigma t}{2}\right)
		+\mathbb{E}\left[\exp(it^\top Y^{(\alpha_i)})\exp(-\sqrt{\alpha_i}it^\top \nabla f(\sqrt{\alpha_i }Y^{(\alpha_i)}+x^*))\right]o(\alpha_i \|t\|^2)\\
		=\;&\mathbb{E}\left[e^{i t^\top Y^{(\alpha_i)}}\right]-\mathbb{E}\left[\frac{\alpha_i t^\top \Sigma t}{2}  e^{i t^\top Y^{(\alpha_i)}}\right]-\mathbb{E}\left[\exp(it^\top Y^{(\alpha_i)})\sqrt{\alpha_i} it^\top \nabla f(\sqrt{\alpha_i}Y^{(\alpha_i)}+x^*)\right]\\
		&+\mathbb{E}\left[\frac{\alpha_i t^\top \Sigma t}{2}\exp(it^\top Y^{(\alpha_i)})\sqrt{\alpha_i} it^\top \nabla f(\sqrt{\alpha_i}Y^{(\alpha_i)}+x^*)\right]\\
		&+\mathbb{E}\left[e^{i t^\top Y^{(\alpha_i)}}\mathcal{O}\left(\alpha_i \|t\|^2\|\nabla f(\sqrt{\alpha_i}Y^{(\alpha_i)}+x^*\|^2\right)\right]\\
		&-\mathbb{E}\left[\frac{\alpha_i t^\top \Sigma t}{2}e^{i t^\top Y^{(\alpha_i)}}\mathcal{O}\left(\alpha_i \|t\|^2\|\nabla f(\sqrt{\alpha_i}Y^{(\alpha_i)}+x^*\|^2\right)\right]\\
		&+\mathbb{E}\left[\exp(it^\top Y^{(\alpha_i)})\exp(-\sqrt{\alpha_i}it^\top \nabla f(\sqrt{\alpha_i }Y^{(\alpha_i)}+x^*))\right]o(\alpha_i \|t\|^2).
	\end{align*}
	Simplify the above equality and we obtain
	\begin{align*}
		\underbrace{\mathbb{E}\left[\frac{t^\top \Sigma t}{2}  e^{i t^\top Y^{(\alpha_i)}}\right]}_{T_1}=\;&-\underbrace{\mathbb{E}\left[\exp(it^\top Y^{(\alpha_i)})\frac{it^\top \nabla f(\sqrt{\alpha_i}Y^{(\alpha_i)}+x^*)}{\sqrt{\alpha_i}} \right]}_{T_2}\\
		&+\underbrace{\mathbb{E}\left[\frac{t^\top \Sigma t}{2}\exp(it^\top Y^{(\alpha_i)})\sqrt{\alpha_i} it^\top \nabla f(\sqrt{\alpha_i}Y^{(\alpha_i)}+x^*)\right]}_{T_3}\\
		&+\underbrace{\mathbb{E}\left[e^{i t^\top Y^{(\alpha_i)}}\mathcal{O}\left( \|t\|^2\|\nabla f(\sqrt{\alpha_i}Y^{(\alpha_i)}+x^*\|^2\right)\right]}_{T_4}\\
		&-\underbrace{\mathbb{E}\left[\frac{t^\top \Sigma t}{2}e^{i t^\top Y^{(\alpha_i)}}\mathcal{O}\left(\alpha_i \|t\|^2\|\nabla f(\sqrt{\alpha_i}Y^{(\alpha_i)}+x^*\|^2\right)\right]}_{T_5}\\
		&+\underbrace{\mathbb{E}\left[\exp(it^\top Y^{(\alpha_i)})\exp(-\sqrt{\alpha_i}it^\top \nabla f(\sqrt{\alpha_i }Y^{(\alpha_i)}+x^*))\right]\frac{o( \alpha_i \|t\|^2)}{\alpha_i}}_{T_6}.
	\end{align*}
	We next let $i$ go to infinity on both sides of the previous inequality and evaluate the limit of the terms $\{T_i\}_{1\leq i\leq 6}$. 
	
	Since $\{Y^{(\alpha_i)}\}$ converges weakly to some random variable $Y$, we have by continuity theorem (Theorem 3.3.17 in \cite{durrett2019probability}) that
	\begin{align*}
		\lim_{i\rightarrow \infty} \mathbb{E}\left[\frac{t^\top \Sigma t}{2}  e^{i t^\top Y^{(\alpha_{i})}}\right]=\frac{t^\top \Sigma t}{2}\mathbb{E}\left[  e^{i t^\top Y}\right].
	\end{align*}
	For the term $T_6$, we have by bounded convergence theorem that $\lim_{\alpha_i\rightarrow 0}T_6=0$. To evaluate the terms $T_2$ to  $T_5$, the following definition and result from \cite{van2000asymptotic} is needed.
	
	\begin{definition}
		A sequence of random variables $\{X_n\}$ is called asymptotically uniformly integrable if $\lim_{M\rightarrow\infty}\limsup_{n\rightarrow\infty}\mathbb{E}[|X_n|\mathbb{I}\{|X_n|>M\}]=0$.
	\end{definition}
	
	\begin{theorem}[Theorem 2.20 in \cite{van2000asymptotic}]\label{thm:weak_convergence}
		Let $f:\mathbb{R}^d\mapsto\mathbb{R}$ be measurable and continuous at every point in a set $C$. Let $X_n\Rightarrow X$, where $X$ takes its values in $C$. Then $\mathbb{E}[f (X_n)]\rightarrow\mathbb{E}[f(X)]$ if and only if the
		sequence of random variables $f(X_n)$ is asymptotically uniformly integrable.
	\end{theorem}
	
	Now consider the term $T_2$. Since
	\begin{align*}
		&\mathbb{E}\bigg[\;\left|\exp(it^\top Y^{(\alpha_i)})\frac{it^\top \nabla f(\sqrt{\alpha_i}Y^{(\alpha_i)}+x^*)}{\sqrt{\alpha_i}}\right|\times\\
		&\mathbb{I}\left\{\left|\exp(it^\top Y^{(\alpha_i)})\frac{it^\top \nabla f(\sqrt{\alpha_i}Y^{(\alpha_i)}+x^*)}{\sqrt{\alpha_i}}\right|>M\right\}\bigg]\\
		\leq \;&\frac{1}{M}\mathbb{E}\bigg[\;\left|\exp(it^\top Y^{(\alpha_i)})\frac{it^\top \nabla f(\sqrt{\alpha_i}Y^{(\alpha_i)}+x^*)}{\sqrt{\alpha_i}}\right|^2\times\\
		&\mathbb{I}\left\{\left|\exp(it^\top Y^{(\alpha_i)})\frac{it^\top \nabla f(\sqrt{\alpha_i}Y^{(\alpha_i)}+x^*)}{\sqrt{\alpha_i}}\right|>M\right\}\bigg]\\
		\leq \;&\frac{1}{\alpha_i M }\mathbb{E}\left[\;\left|t^\top \nabla f(\sqrt{\alpha_i}Y^{(\alpha_i)}+x^*)\right|^2\mathbb{I}\left\{\left|\exp(it^\top Y^{(\alpha_i)})\frac{it^\top \nabla f(\sqrt{\alpha_i}Y^{(\alpha_i)}+x^*)}{\sqrt{\alpha_i}}\right|>M\right\}\right]\\
		\leq \;&\frac{\|t\|^2}{\alpha_i M }\mathbb{E}\left[\;\|\nabla f(\sqrt{\alpha_i}Y^{(\alpha_i)}+x^*)\|^2\right]\\
		\leq \;&\frac{L\|t\|^2}{M }\mathbb{E}\left[\;\|Y^{(\alpha_i)}\|^2\right]\\
		\leq \;&\frac{L\|t\|^2}{M }\mathbb{E}\left[\;\|Y^{(\alpha_i)}\|^2\right]\\
		\leq \;&\frac{L\text{Trace}(\Sigma)\|t\|^2}{\sigma M },
	\end{align*}
	which goes to zero as $M\rightarrow\infty$, we have by Theorem \ref{thm:weak_convergence} that
	\begin{align*}
		\lim_{i\rightarrow \infty}T_2=\mathbb{E}\left[\exp(it^\top Y)it^\top H_f Y\right].
	\end{align*}
	Using the same line of analysis, we have $\lim_{i\rightarrow \infty}T_3=\lim_{i\rightarrow \infty}T_4=\lim_{i\rightarrow \infty}T_5=0$. It follows that
	\begin{align*}
		\mathbb{E}\left[\frac{t^\top \Sigma t}{2}  e^{i t^\top Y}\right]=-\mathbb{E}\left[\exp(it^\top Y)it^\top H_f Y\right].
	\end{align*}
	Rearranging terms and we obtain the resulting equation.
\end{proof} 

\begin{lemma}\label{le: unique solution}
	Suppose Eq. (\ref{eq:40}) admits a unique solution. Then the random variable $Y$ given in Lemma \ref{le:equation} follows a Gaussian distribution with mean zero, and covariance matrix $\Sigma_Y$ being the unique solution of the Lyapunov equation $H_f \Sigma_Y+\Sigma_Y H_f^\top =\Sigma$.
\end{lemma}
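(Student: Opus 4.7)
The plan is to turn the lemma into a straightforward verification: exhibit the Gaussian candidate, check that its characteristic function solves Eq.~(\ref{eq:40}), and then invoke the uniqueness hypothesis to conclude. First I would recall that under Assumption~\ref{as:SGD} the Hessian $H_f$ is symmetric positive definite (since $f$ is $\sigma$-strongly convex and twice differentiable), and by Assumption~\ref{as:noise} the covariance $\Sigma$ is positive definite. Standard Lyapunov theory (see e.g.\ \cite{haddad2011nonlinear}) then guarantees that the equation $H_f\Sigma_Y+\Sigma_Y H_f^\top=\Sigma$ admits a unique positive definite solution $\Sigma_Y$, given explicitly by $\Sigma_Y=\int_0^\infty e^{-H_fu}\Sigma e^{-H_f^\top u}\,du$.

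Next I would let $\widetilde Y\sim\mathcal{N}(0,\Sigma_Y)$ be the Gaussian candidate, with characteristic function $\phi_{\widetilde Y}(t)=\exp\!\bigl(-\tfrac{1}{2}t^\top \Sigma_Y t\bigr)$, and verify that $\widetilde Y$ satisfies Eq.~(\ref{eq:40}). The key computation is the identity
\begin{align*}
\mathbb{E}\bigl[\widetilde Y\, e^{it^\top \widetilde Y}\bigr]
= -i\,\nabla_t \phi_{\widetilde Y}(t)
= i\,\Sigma_Y t\, \phi_{\widetilde Y}(t),
\end{align*}
obtained by differentiating the characteristic function under the integral (justified by $\mathbb{E}\|\widetilde Y\|_2<\infty$ for a Gaussian). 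Substituting into the right-hand side of Eq.~(\ref{eq:40}) gives
\begin{align*}
-\mathbb{E}\!\left[e^{it^\top \widetilde Y}\, it^\top H_f \widetilde Y\right]
= -it^\top H_f \cdot i\Sigma_Y t\, \phi_{\widetilde Y}(t)
= t^\top H_f \Sigma_Y t\, \phi_{\widetilde Y}(t).
\end{align*}

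Using the symmetrization trick $t^\top H_f\Sigma_Y t = \tfrac{1}{2}t^\top(H_f\Sigma_Y+\Sigma_Y H_f^\top)t$, together with the Lyapunov equation, this becomes $\tfrac{1}{2}t^\top \Sigma t\,\phi_{\widetilde Y}(t)$, which matches the left-hand side of Eq.~(\ref{eq:40}) for $\widetilde Y$. Hence $\widetilde Y$ is a solution. By the uniqueness hypothesis (in terms of distributions) on Eq.~(\ref{eq:40}), the weak limit $Y$ from Lemma~\ref{le:equation} must coincide in distribution with $\widetilde Y$, yielding the stated Gaussian characterization.

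Since all steps reduce to well-known manipulations, no step is a genuine obstacle. The only subtle point worth double-checking is the differentiation under the expectation that produces $\mathbb{E}[\widetilde Y e^{it^\top \widetilde Y}]=i\Sigma_Y t\,\phi_{\widetilde Y}(t)$; I would cite the dominated convergence argument (or the same differentiation lemma from \cite{flanders1973differentiation} used in Section~\ref{subsec:uniqueness}) to justify it rigorously, which is routine given the finite second moment of $\widetilde Y$.
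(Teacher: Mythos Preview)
Your proposal is correct and follows the same verification strategy as the paper: show that the Gaussian $\mathcal{N}(0,\Sigma_Y)$ satisfies Eq.~(\ref{eq:40}) and then invoke the uniqueness hypothesis. The only difference is in how the verification is executed: the paper writes out the Gaussian density explicitly, completes the square $e^{it^\top y}e^{-\frac12 y^\top\Sigma_Y^{-1}y}=e^{-\frac12 t^\top\Sigma_Y t}e^{-\frac12(y-i\Sigma_Y t)^\top\Sigma_Y^{-1}(y-i\Sigma_Y t)}$, and shifts the integration variable to reduce everything to the Lyapunov identity; you instead differentiate the known characteristic function to obtain $\mathbb{E}[\widetilde Y e^{it^\top\widetilde Y}]=i\Sigma_Y t\,\phi_{\widetilde Y}(t)$ directly, then symmetrize. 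Your route is slightly cleaner and avoids the (formally delicate) complex shift of contour implicit in the paper's change of variable, at the modest cost of the differentiation-under-the-integral justification you already flagged.
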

\begin{proof}[Proof of Lemma \ref{le: unique solution}]
	Suppose that Eq. (\ref{eq:SGD}) has a unique solution, we only need to verify that the multinormal distribution with mean zero and covariance matrix being the solution to the Lyapunov equation $H_f^\top \Sigma_Y+\Sigma_Y H_f=\Sigma$ solves equation \eqref{eq:SGD}.
	\begin{align*}
		&\mathbb{E}\left[(2it^\top H_f Y + t^\top \Sigma t)e^{it^\top Y}\right]\\
		=\;&C\int_{\mathbb{R}^d}(2it^\top H_f y + t^\top \Sigma t)e^{it^\top y}e^{-\frac{1}{2}y^\top \Sigma_Y^{-1} y}dy\tag{$C=\frac{1}{\sqrt{\left( 2\pi \right)^d \textit{det}(\Sigma_Y)}}$}\\
		=\;&Ce^{-\frac{1}{2}t^\top \Sigma_Y t}\int_{\mathbb{R}^d}(2it^\top H_f y + t^\top \Sigma t)e^{-\frac{1}{2}(y-i\Sigma_Y t)^\top \Sigma_Y^{-1}(y-i\Sigma_Y t)}dy\\
		=\;&Ce^{-\frac{1}{2}t^\top \Sigma_Y t}\int_{\mathbb{R}^d}(2it^\top H_f(z+i\Sigma_Y t)+t^\top \Sigma t)e^{-\frac{1}{2}z^\top \Sigma_Y^{-1} z} dz\tag{change of variable}\\
		=\;&Ce^{-\frac{1}{2}t^\top \Sigma_Y t}\int_{\mathbb{R}^d}(-2t^\top H_f\Sigma_Y t+t^\top \Sigma t)e^{-\frac{1}{2}z^\top \Sigma_Y^{-1} z} dz\\
		=\;&Ce^{-\frac{1}{2}t^\top \Sigma_Y t}\int_{\mathbb{R}^d}(-t^\top (H_f\Sigma_Y+\Sigma_Y H_f^\top) t-t^\top t)e^{-\frac{1}{2}z^\top \Sigma_Y^{-1} z} dz\\
		=\;&Ce^{-\frac{1}{2}t^\top \Sigma_Y t}\int_{\mathbb{R}^d}(-t^\top \Sigma t+t^\top \Sigma t)e^{-\frac{1}{2}z^\top \Sigma_\alpha z} dz\tag{The Lyapunov equation}\\
		=\;&0.
	\end{align*}
\end{proof}

\section{Identifying the Suitable Scaling Function for More General SA Algorithms}\label{sec:scaling}
In the previous section, we have shown that for several particular SA algorithms (e.g. SGD, linear SA, and contractive SA), the scaling function is $g(\alpha)=\sqrt{\alpha}$ and distribution of the limiting random variable $Y$ is Gaussian. In this section, we consider more general SA algorithms. We first show impirically in the following section that in general the scaling function need not be $g(\alpha)=\sqrt{\alpha}$, and the distribution of $Y$ need not be Gaussian.

\subsection{Numerical Experiments}\label{subsec:experiment}

Suppose that Algorithm  (\ref{algo:sa}) is the SGD algorithm for minimizing the scalar objective $f(x)=x^4/4$. That is:
\begin{align}\label{algo:sa:x^4}
	X_{k+1}^{(\alpha)}=X_k^{(\alpha)}+\alpha\left(-(X_k^{(\alpha)})^3+w_k\right).
\end{align}
Note that $f(\cdot)$ in this case is neither smooth nor strongly convex. It is clear that the unique minimizer of $f(\cdot)$ is zero. Let the centered scaled iterate $Y_k^{(\alpha)}$ be defined by $Y_k^{(\alpha)}=X_k^{(\alpha)}/g(\alpha)$. We next use numerical simulation to show that the correct scaling function in this case is $g(\alpha)=\alpha^{1/4}$ instead of $g(\alpha)=\sqrt{\alpha}$.

\begin{figure}[h]
	\begin{minipage}{0.45\textwidth}
		\centering
		\includegraphics[width=\linewidth]{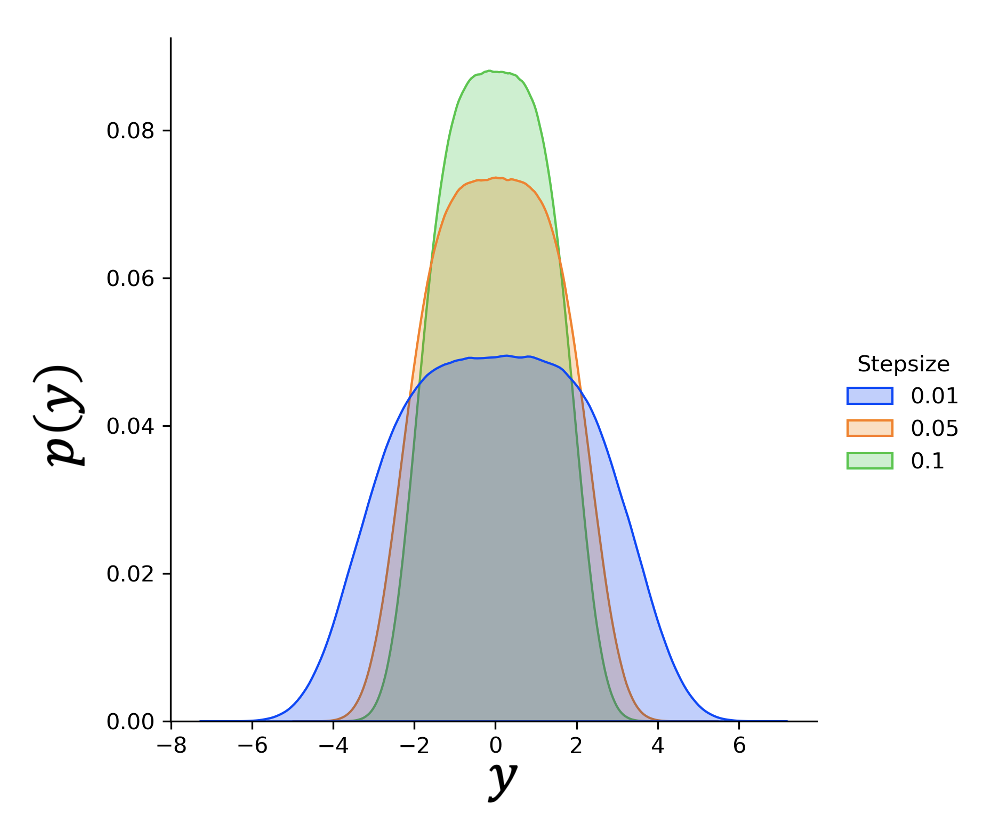}
		\caption{Estimated Density Functions when choosing $g(\alpha)=\alpha^{1/2}$}
		\label{fig:1}
	\end{minipage}
	\hspace{0.02\textwidth}
	\begin{minipage}{0.45\textwidth}
		\centering
		\includegraphics[width=\linewidth]{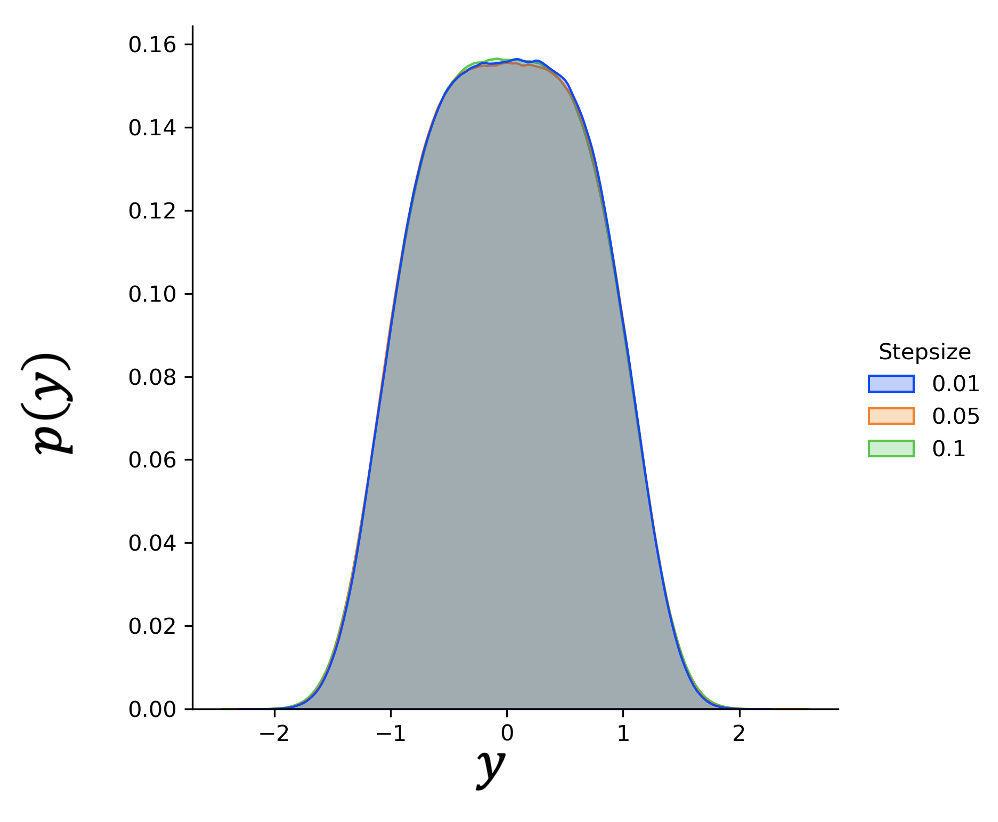}
		\caption{Estimated Density Functions when choosing $g(\alpha)=\alpha^{1/4}$}
		\label{fig:2}
	\end{minipage}
	\vfill
	\begin{minipage}{0.45\textwidth}
		\centering
		\includegraphics[width=\linewidth]{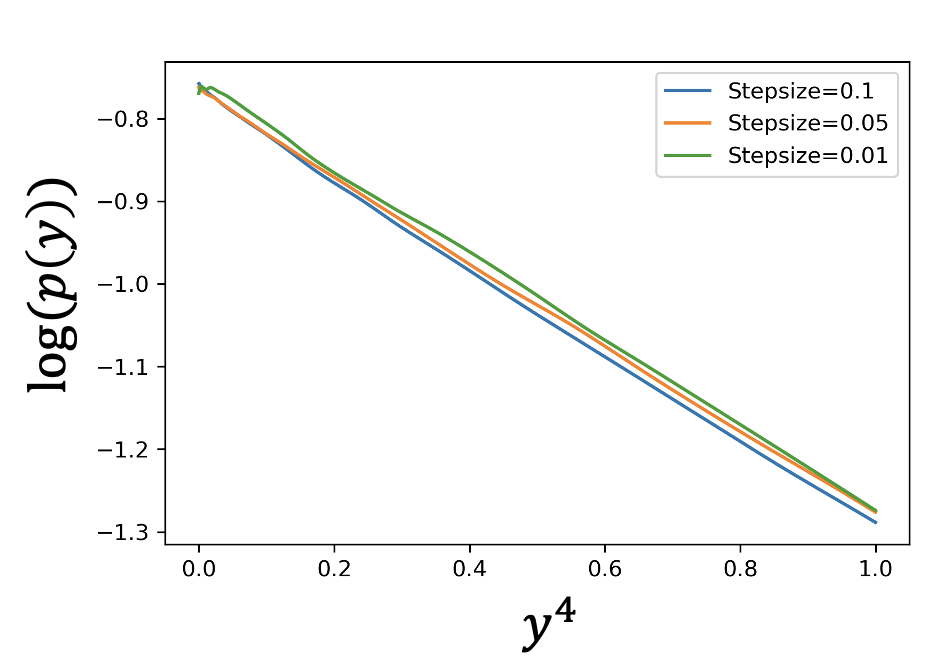}
		\caption{$\log(p_Y(y))$ as a function of $y^4$}\label{fig:3}
	\end{minipage}
\end{figure}

In Figures \ref{fig:1} and \ref{fig:2}, we plot the empirical density function of $Y^{(\alpha)}$ for different $\alpha$. For the right scaling function, we expect the density function to converge as $\alpha$ decreases, while for the wrong scaling function, we expect the density function to change drastically for order-wise different $\alpha$. As we see, it is clear that $g(\alpha)=\sqrt{\alpha}$ is not suitable in this case, and $g(\alpha)=\alpha^{1/4}$ seems to be the right scaling. 

To further verify this result, we plot the logarithmic empirical density function as a function of $y^4$ in Figure \ref{fig:3}. We observe linear growth in Figure \ref{fig:3}. This indicates that the density function $p_Y(y)$ is proportional to $e^{\beta y^4}$, where $\beta$ is some numerical constant. Therefore, numerical experiments suggest that the distribution of $Y$ is not Gaussian but super Gaussian in this problem.

\subsection{A Method to Determine the Suitable Scaling Function}

Inspired by the numerical simulations provided in the previous section, we here provide a method to determine the correct scaling function for general SA algorithms. 

To gain intuition, we consider the centered scaled iterates $Y_k^{(\alpha)}=X_k^{(\alpha)}/\alpha^{1/4}$ for SA algorithm (\ref{algo:sa:x^4}). The update equation of $Y_k^{(\alpha)}$ is given by
\begin{align*}
	Y_{k+1}^{(\alpha)}=Y_k^{(\alpha)}-\alpha^{3/2}(Y_k^{(\alpha)})^3+\alpha^{3/4}w_k.
\end{align*}
Notably, the factor in terms of the stepsize $\alpha$ in front of the term $(Y_k^{(\alpha)})^3$ is $\alpha^{3/2}$, which is equal to the square of the factor $\alpha^{3/4}$ in front of the noise term $w_k$. 

Now for general SA algorithm (\ref{algo:sa}), by rewriting the update equation (\ref{algo:sa}) in terms of the centered scaled iterate $Y_k^{(\alpha)}=(X_k^{(\alpha)}-x^*)/g(\alpha)$, we have
\begin{align}\label{eq:50}
	Y_{k+1}^{(\alpha)}
	=Y_k^{(\alpha)}+\left(\frac{\alpha}{g(\alpha)}\right)^2\frac{g(\alpha)F(Y_k^{(\alpha)}g(\alpha)+x^*)}{\alpha}+\frac{\alpha}{g(\alpha)}w_k.
\end{align}
In view of the previous equation and the empirical observations in the previous section, we see that we need to choose a scaling function $g(\alpha)$ such that the following condition is satisfied. 

\begin{condition}\label{con:1}
	The scaling function $g(\cdot)$ should be chosen such that
	\begin{enumerate}[(1)]
		\item $\lim_{\alpha\rightarrow 0}\frac{\alpha}{g(\alpha)}=0$ and $\lim_{\alpha\rightarrow 0}g(\alpha)=0$
		\item The function $\Tilde{F}:\mathbb{R}^d\mapsto\mathbb{R}^d$ defined by $\Tilde{F}(y)=\lim_{\alpha\rightarrow 0}\frac{g(\alpha)F(yg(\alpha)+x^*)}{\alpha}$
		is a nontrivial function in the sense that $\Tilde{F}(\cdot)$ is not identically equal to zero or infinity.
	\end{enumerate}
\end{condition}

We next verify the choice of scaling functions in Section \ref{sec:distribution} using our proposed Condition \ref{con:1}. For SGD with a smooth and strong convex objective, since
\begin{align*}
	\sigma\|x-x^*\|_2\leq  \|\nabla f(x)-\nabla f(x^*)\|_2=\|\nabla f(x)\|_2\leq L\|x-x^*\|_2,\quad \forall\;x\in\mathbb{R}^d,
\end{align*}
we have
\begin{align*}
	\sigma\frac{g(\alpha)^2}{\alpha}\|y\|_2\leq \left\|\frac{g(\alpha) \nabla f(g(\alpha) y+x^*)}{\alpha}\right\|_2\leq L\frac{g(\alpha)^2}{\alpha}\|y\|_2.
\end{align*}
In view of the previous inequality and Condition \ref{con:1}, it is clear that the only possible choice of $g(\alpha)$ is $g(\alpha)=\sqrt{\alpha}$. 

For linear SA algorithm studied in Section \ref{subsec:linear_SA}, one can also easily show using Condition \ref{con:1} that $g(\alpha)=\sqrt{\alpha}$. As for contractive SA studied in Section \ref{subsec:contractive_SA}, using the contraction property and we have 
\begin{align*}
	(1-\gamma)\|x-x^*\|_\mu\leq  \|\mathcal{T}(x)-x\|_\mu=\|\mathcal{T}(x)-\mathcal{T}(x^*)-(x-x^*)\|_\mu\leq (1+\gamma)\|x-x^*\|_\mu.
\end{align*}
It follows that
\begin{align*}
	\frac{g(\alpha)^2}{\alpha}(1-\gamma)\|y\|_\mu\leq \left\|\frac{g(\alpha) [\mathcal{T}(g(\alpha) y+x^*)-(g(\alpha) y+x^*)]}{\alpha}\right\|_\mu\leq \frac{g(\alpha)^2}{\alpha}(1+\gamma)\|y\|_\mu.
\end{align*}
Since all norms are ``equivalent'' in finite dimensional spaces, the previous inequality implies that we must choose $g(\alpha)=\sqrt{\alpha}$.

Now to further verify the correctness of the scaling function suggested by Condition \ref{con:1}, consider the SGD algorithm
\begin{align*}
	X_{k+1}^{(\alpha)}=X_k^{(\alpha)}+\alpha(-\nabla  f(X_k^{(\alpha)})+w_k)
\end{align*}
with the following two choices of objective functions: (1) $f(x)=e^{x^2}$, and (2) $f(x)=\frac{x^4}{4}+\frac{\sin^2(x)}{2}$. Note that in these two cases the function $f(\cdot)$ is not smooth and strongly convex. 

\textbf{Case 1.} In the first case where $f(x)=e^{x^2}$, since
\begin{align*}
	\left\|\frac{g(\alpha) F(yg(\alpha))}{\alpha}\right\|_2=\frac{g(\alpha)^2}{\alpha}2|y|e^{(yg(\alpha))^2},
\end{align*}
when choosing $g(\alpha)=\sqrt{\alpha}$, we have $\Tilde{F}(y)=\lim_{\alpha\rightarrow 0}\frac{g(\alpha)^2}{\alpha}2ye^{(yg(\alpha))^2}=2y$. 

One interesting implication of this example is the following. In the three SA algorithms considered in Section \ref{sec:distribution}, it seems that it is the function $F(\cdot)$ that appears in the algorithm determines the distribution of $Y$. However, the above example suggests that it is the function $\Tilde{F}(\cdot)$ of Condition \ref{con:1} instead of $F(\cdot)$ that directly impacts the distribution $Y$. In SGD with a smooth and strongly convex objective, linear SA, and contractive SA, $F(\cdot)$ and $\Tilde{F}(\cdot)$ happen to coincides, but this is in general not the case. In fact, since we have $\frac{de^{x^2}}{dx}=\sum_{i=1}^\infty (2i)x^{2i-1}$ by Taylor series, $\Tilde{F}(\cdot)$ in this example is exactly the dominant terms appears in the Taylor series. In addition, this suggests that the distribution of the limiting random variable $Y$ has a density function proportional to $e^{\beta' x^2}$, where $\beta'$ is a numerical constant.

We next verify this choice of $g(\alpha)$ and the distribution of $Y^{(\alpha)}$ for small enough $\alpha$ using numerical simulation in the following.

\begin{figure}[h]
	\begin{minipage}{0.45\textwidth}
		\centering
		\includegraphics[width=\linewidth]{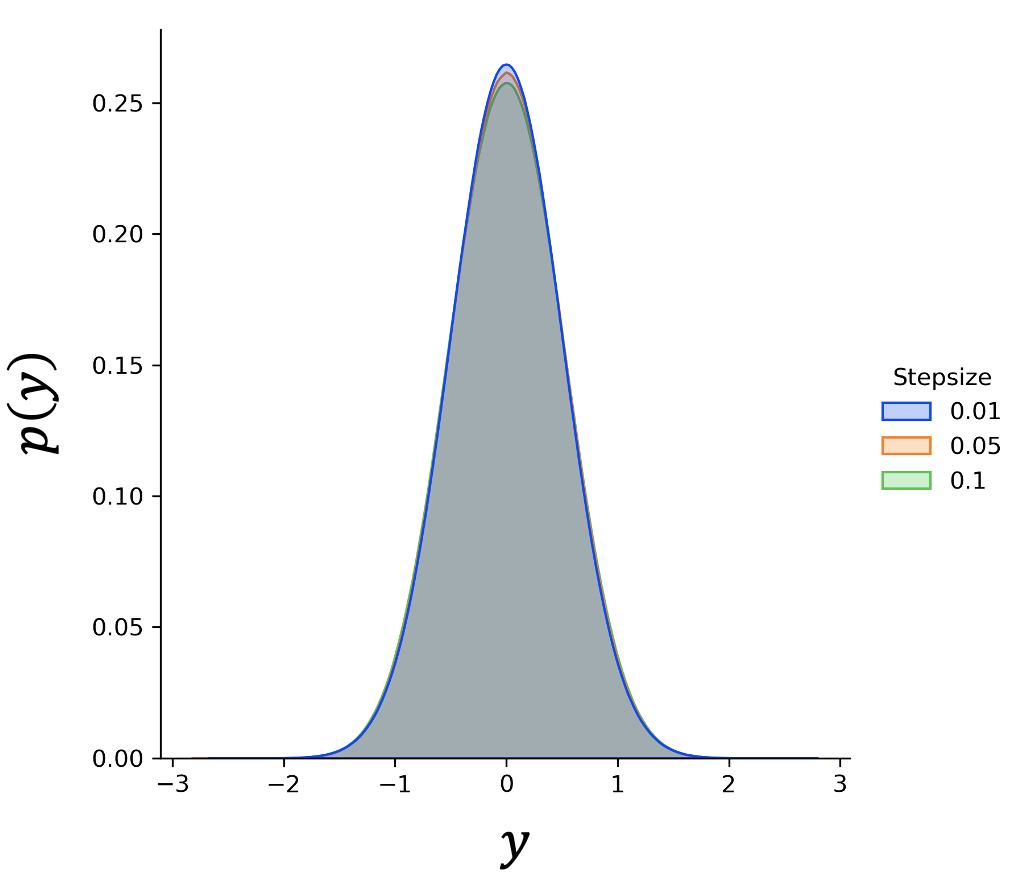}
		\caption{Estimated Density Functions when choosing $g(\alpha)=\alpha^{1/2}$}
		\label{fig:4}
	\end{minipage}
	\hfill
	\begin{minipage}{0.45\textwidth}
		\centering
		\includegraphics[width=\linewidth]{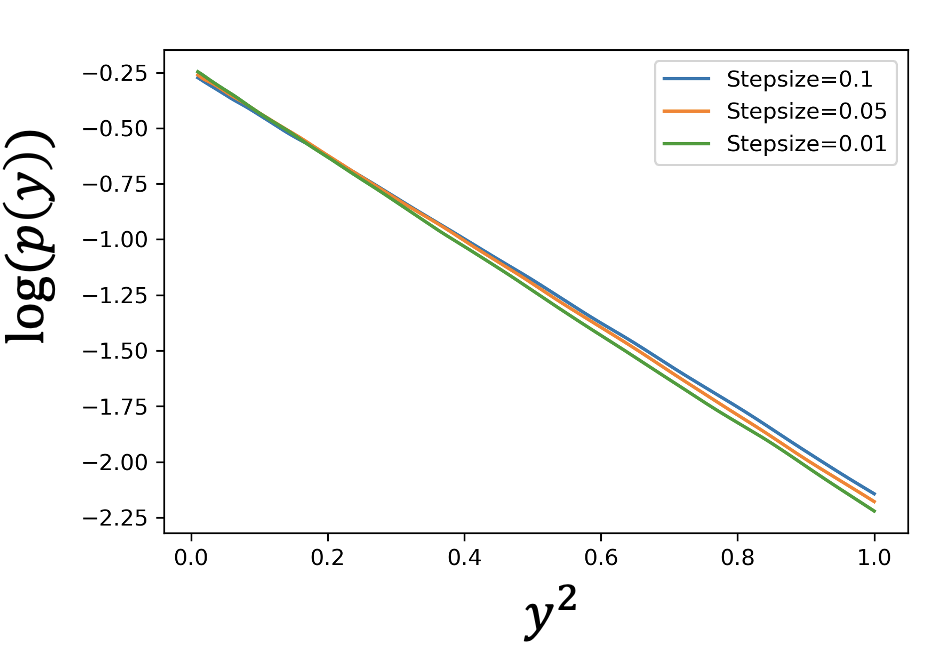}
		\caption{$\log(p_Y(y))$ as a function of $y^2$}\label{fig:5}
	\end{minipage}
\end{figure}

We see from Figure \ref{fig:4} that with the scaling function $g(\alpha)=\sqrt{\alpha}$, the empirical density function of the random variable $Y^{(\alpha)}$ seems to converge. and Figure \ref{fig:5} further justifies this result by showing that the density function $p_Y(y)$ of the distribution of $Y$ in this case is proportional to $e^{\beta' x^2}$, where $\beta'$ is a numerical constant.

\textbf{Case 2.} Now consider case where $f(x)=\frac{x^4}{4}+\frac{\sin^2(x)}{2}$. Observe that
\begin{align*}
	\left\|\frac{g(\alpha)F(yg(\alpha))}{\alpha}\right\|_2=\frac{g(\alpha)}{\alpha}|y^3g(\alpha)^3+sin(yg(\alpha))cos(yg(\alpha))|.
\end{align*}
Since $\lim_{x\rightarrow 0}\frac{sin(x)}{x}=1$, the only possible choice of the scaling function $g(\alpha)$ to satisfy Condition \ref{con:1} (2) is $g(\alpha)=\sqrt{\alpha}$. In this case, we have $\Tilde{F}(y)=\lim_{\alpha\rightarrow 0} \frac{1}{\sqrt{\alpha}}y^3\alpha^{3/2}+sin(y\sqrt{\alpha})cos(y\sqrt{\alpha})=y$ by L'Hôpital's rule. This is another example where $F(\cdot)\neq \Tilde{F}(\cdot)$. In fact, since $x^4$ is dominated by $sin^2(x)$ as $x$ approaches $x^*$ (which is $0$), the scaling function and the function $\Tilde{F}(\cdot)$ are determined only by the dominant term.
.

Similarly, we verify this choice of scaling function via numerical experiments. In Figures \ref{fig:10} and \ref{fig:11}, we plot the empirical density function of the random variable $Y^{(\alpha)}$ for different stepsize $\alpha$, and see if the density function converges as $\alpha$ goes to zero. The results suggest that $g(\alpha)=\alpha^{1/2}$ seems to be the correct scaling. To further verify this result, we plot the logarithmic function of the empirical density of $Y^{(\alpha)}$ as a function of $y^2$ and observe straight lines. Therefore, the distribution of $Y^{(\alpha)}$ is proportional to $e^{\beta'' x^2}$, where $\beta''$ is a numerical constants.

\begin{figure}[h]
	\begin{minipage}{0.45\textwidth}
		\centering
		\includegraphics[width=\linewidth]{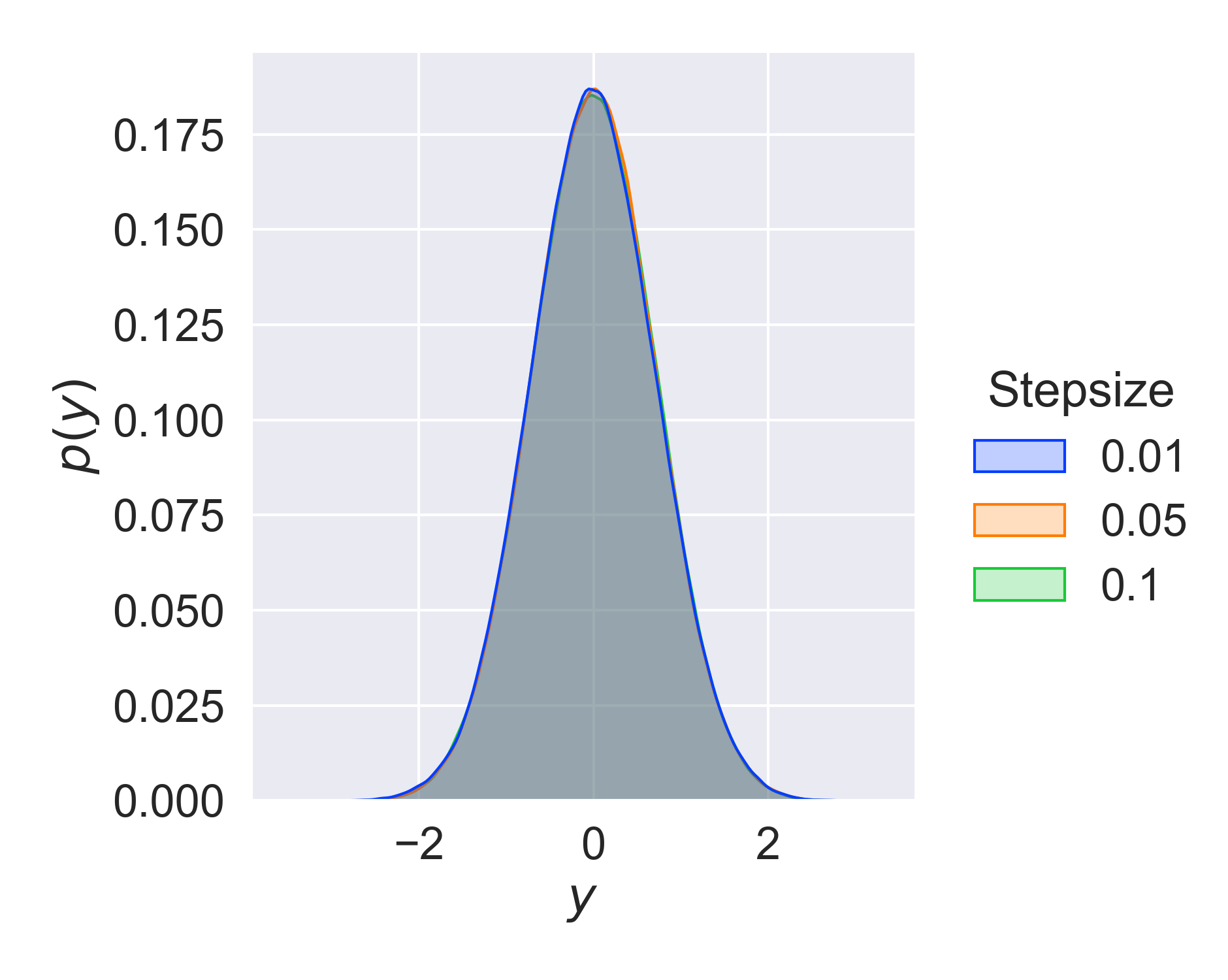}
		\caption{Estimated Density Functions when choosing $g(\alpha)=\alpha^{1/2}$}
		\label{fig:10}
	\end{minipage}
	\hspace{0.02\textwidth}
	\begin{minipage}{0.45\textwidth}
		\centering
		\includegraphics[width=\linewidth]{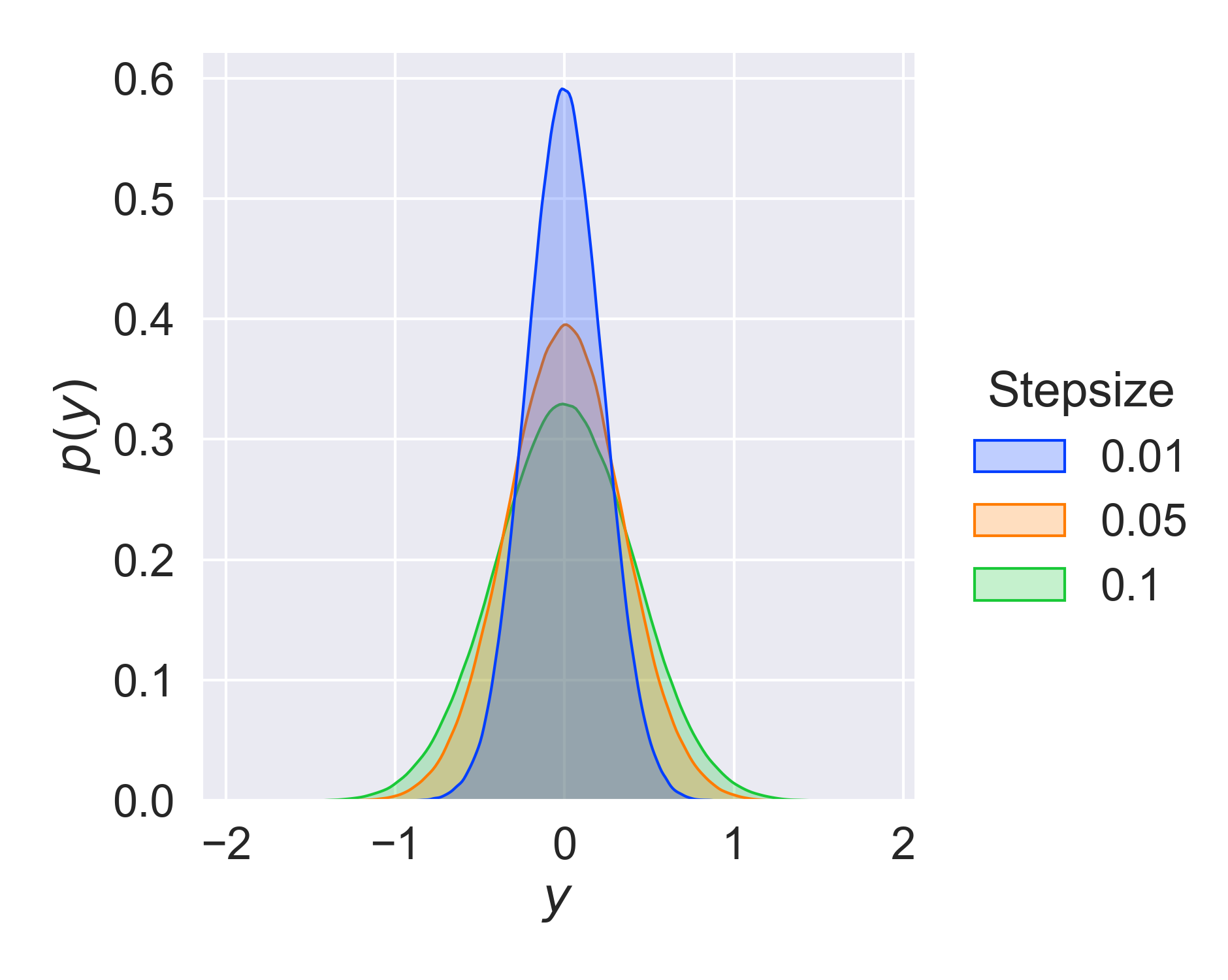}
		\caption{Estimated Density Functions when choosing $g(\alpha)=\alpha^{1/4}$}
		\label{fig:11}
	\end{minipage}
	\vfill
	\begin{minipage}{0.45\textwidth}
		\centering
		\includegraphics[width=\linewidth]{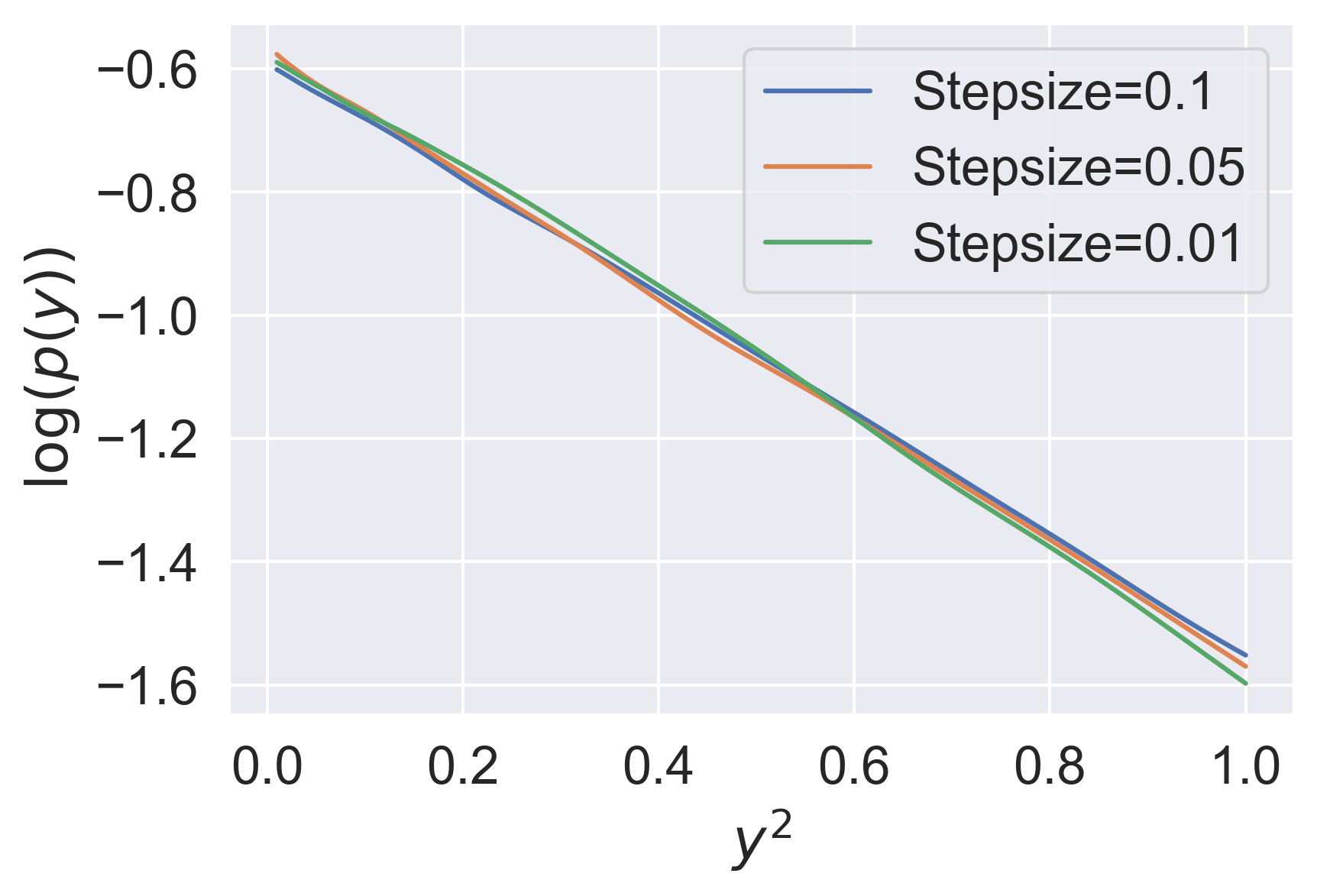}
		\caption{$\log(p_Y(y))$ as a function of $y^2$}
		\label{fig:12}
	\end{minipage}
\end{figure}

\subsection{Connection to Euler-Maruyama Discretization Scheme for Approximating SDE}

The choice of the scaling function suggested by Condition \ref{con:1} has an insightful connection to the Euler-Maruyama discretization scheme for approximate the solution of an SDE, as elaborated below. Let $(B_t)_{t\geq 0}$ be a Brownian motion. Consider the following SDE:
\begin{align}\label{eq:SDE}
	dX_t=F(X_t)dt+dB_t
\end{align}
with initial condition $X_0$. The Euler-Maruyama discretization $\{\hat{X}_k\}$ to the solution $(X_t)$ of SDE (\ref{eq:SDE}) is defined as follows. Let $\Delta t$ be the discretization accuracy. Set $\hat{X}_0=X_0$, and recursively define $\hat{X}_k$ according to 
\begin{align*}
	\hat{X}_{k+1}=\hat{X}_k+\Delta t F(\hat{X}_k)+ (B_{(k+1)\Delta t}-B_{k \Delta t}).
\end{align*}
Since $(B_t)_{t\geq 0}$ is a Brownian motion, we have $(B_{(k+1)\Delta t}-B_{k \Delta t})\sim \mathcal{N}(0,\Delta t)$. Therefore, by letting $\{Z_k\}$ be an i.i.d. sequence of standard normal random variables, we can rewrite the previous equation as
\begin{align}\label{eq:Euler-discretization}
	\hat{X}_{k+1}=\hat{X}_k+\Delta t F(\hat{X}_k)+ \sqrt{\Delta t} Z_k.
\end{align}

The approximation property of the Euler-Maruyama discretization (\ref{eq:Euler-discretization}) to its corresponding SDE (\ref{eq:SDE}) has been studied in the literature, see \cite{mou2019improved}. Specifically, it was shown that under some mild conditions on $F(\cdot)$, the Euler-Maruyama scheme is known to have the first-order accuracy of the SDE (\ref{eq:SDE}). As a consequence, intuitively, when $(X_t)_{t\geq 0}$ has a stationary distribution $\mu$, the limiting distribution $\mu_{\Delta t}$ of $\{\hat{X}_k\}$ as a function of the discretization accuracy $\Delta t$ should converge weakly to $\mu$ as $\Delta t$ tends to zero. If we view the discretization accuracy $\Delta t$ as the stepsize in Eq. (\ref{eq:Euler-discretization}). In order for $\mu_{\Delta t}$ to converge to some nontrivial distribution $\mu$ as $\Delta t$ tends to zero, it is important to notice that the scaling factor of the noise $Z_k$ in terms of $\Delta t$ must be order-wise equal to the square root of the scaling factor of $F(\hat{X}_k)$. This observation coincides with Eq. (\ref{eq:50}) in the previous section, which eventually leads to our Condition \ref{con:1}.

\section{Conclusion}\label{sec:conclusion}

In this paper, we characterize the asymptotic stationary distribution of properly centered scaled iterate of SA algorithms. In particular, we show that for (1) SGD with smooth and strongly convex objective, (2) linear SA, and (3) contractive SA, the scaling function is $g(\alpha)=\sqrt{\alpha}$ and the corresponding stationary distribution is Gaussian. For SA beyond these cases, we empirical show that the stationary distribution need not be Gaussian, and provide a method for determine the suitable scaling function. Since our paper is the first study for this problem, it might open a door for research in this direction.

\bibliographystyle{imsart-nameyear}
\bibliography{references}

\end{document}